
\documentclass{article}

\usepackage{microtype}
\usepackage{graphicx}
\usepackage{subcaption}
\usepackage{booktabs} 

\usepackage{hyperref}



\usepackage[preprint]{icml2026}

\usepackage{algorithm}
\usepackage{algorithmic}

\usepackage{amsmath}
\usepackage{amssymb}
\usepackage{mathtools}
\usepackage{amsthm}

\usepackage[capitalize,noabbrev]{cleveref}

\theoremstyle{plain}
\newtheorem{theorem}{Theorem}[section]
\newtheorem{proposition}[theorem]{Proposition}
\newtheorem{lemma}[theorem]{Lemma}

\theoremstyle{definition}

\newtheorem{assumption}[theorem]{Assumption}
\theoremstyle{remark}

\newcommand{\E}{\mathbb{E}}
\newtheorem{claim}[theorem]{Claim}


\usepackage[textsize=tiny]{todonotes}

\icmltitlerunning{The Multi-Query Paradox in Zeroth-Order Optimization}

\begin{document}

\twocolumn[
  \icmltitle{The Multi-Query Paradox in Zeroth-Order Optimization}



  \icmlsetsymbol{equal}{*}

  \begin{icmlauthorlist}
    \icmlauthor{Wei Lin}{cuhkcse}
    \icmlauthor{Qingyu Song}{xmu}
    \icmlauthor{Hong Xu}{cuhkcse}
  \end{icmlauthorlist}

  \icmlaffiliation{cuhkcse}{Department of Computer Science and Engineering, The Chinese University of Hong Kong, Hong Kong}
  \icmlaffiliation{xmu}{Xiamen University, China}

  \icmlcorrespondingauthor{Hong Xu}{hongxu@cuhk.edu.hk}

  \icmlkeywords{Machine Learning, ICML}

  \vskip 0.3in
]



\printAffiliationsAndNotice{}  

\begin{abstract}


    Zeroth-order (ZO) optimization provides a powerful framework for problems where explicit gradients are unavailable and have to be approximated using only queries to function value. The prevalent single-query approach is simple, but suffers from high estimation variance, motivating a multi-query paradigm to improve estimation accuracy. This, however, creates a critical trade-off: under a fixed budget of queries (i.e. cost), queries per iteration and the total number of optimization iterations are inversely proportional to one another. How to best allocate this budget is a fundamental, under-explored question. 
    
    This work systematically resolves this query allocation problem. We analyze two aggregation methods: the de facto simple averaging (ZO-Avg), and a new Projection Alignment method (ZO-Align) we derive from local surrogate minimization. By deriving convergence rates for both methods that make the dependence on the number of queries explicit across strongly convex, convex, non-convex, and stochastic settings, we uncover a stark dichotomy: For ZO-Avg, we prove that using more than one query per iteration is always query-inefficient, rendering the single-query approach optimal. On the contrary, ZO-Align generally performs better with more queries per iteration, resulting in a full-subspace estimation as the optimal approach. Thus, our work clarifies that the multi-query problem boils down to a choice not about an intermediate query size, but between two classic algorithms, a choice dictated entirely by the aggregation method used. These theoretical findings are also consistently validated by extensive experiments.
\end{abstract}

\section{Introduction}
\label{sec:intro}

Zeroth-order (ZO) optimization provides a powerful framework for problems where explicit gradient information is unavailable or prohibitively expensive to compute \citep{conn2009introduction}. Unlike first-order methods that rely on analytical gradients, ZO algorithms guide their search using only function value queries to approximate gradients. Typically, it is achieved by evaluating the objective function along one or more random directions and observing the change in function value \citep{spall2002multivariate, mania2018simple}. This paradigm has recently gained significant attention due to its applicability to a wide array of challenging, high-dimensional problems, including fine-tuning of LLMs \citep{malladi2023fine, zhang2024revisiting}, policy optimization in reinforcement learning \citep{sun2020zeroth}, and hyperparameter tuning \citep{li2021zeroth}.

The predominant approach in modern ZO optimization is the single-query method, where a gradient estimate is constructed using one random direction per optimization step \citep{nesterov2017random}. While foundational, these estimators are inherently noisy, with an estimation variance that often scales with the problem's dimension \citep{nesterov2017random}. A natural extension to mitigate this issue is the multi-query approach, which employs a block of queries at each iteration to construct a more accurate gradient estimate \citep{duchi2015optimal}. 

However, this extension introduces a fundamental trade-off. 
Using multiple queries per iteration proportionally reduces the total number of iterations that can be performed under a fixed query budget. This tension gives rise to what we call the \emph{multi-query paradox}: increasing the number of queries per iteration improves the quality of each gradient estimate but simultaneously decreases the total number of updates one can afford.
A systematic investigation of this paradox requires addressing two basic questions: (1) What is the most effective method for combining information from multiple query directions into a single, coherent gradient estimate? (2) For a fixed total query budget, is it more efficient to perform many iterations with noisy, single-query estimates, or fewer iterations with higher-quality, multi-query estimates that consume more queries per iteration?

This work systematically addresses these questions. First, to resolve the aggregation problem, we propose a new multi-query estimator called Projection Alignment (ZO-Align), derived from first principles via local surrogate minimization. While similar estimators have been proposed from the perspective of sparse gradient recovery \citep{cai2022zeroth, wang2018stochastic}, our optimization-centric derivation provides a more fundamental justification for the method, independent of any sparsity assumptions. For a comprehensive comparison, we also analyze the canonical strategy of simply averaging estimates (ZO-Avg) \citep{duchi2015optimal}. Our initial analysis of the estimators' properties reveals that ZO-Align achieves a significantly lower mean squared error, motivating a deeper investigation into its performance within an optimization context. 

Second, to resolve the query allocation problem, we derive convergence rates for algorithms using both ZO-Align and ZO-Avg, making the dependence on the number of queries used per iteration explicit. We then leverage these rates to formally pose and solve an optimization problem: given a fixed total query budget, determine the allocation of queries per iteration that minimizes the final optimization error. To ensure the generality and robustness of our conclusions, this theoretical framework is systematically applied across a broad spectrum of problem structures, including strongly convex, convex, non-convex, and stochastic settings.

Our analysis reveals a stark dichotomy in optimal allocation strategies, critically dependent on the chosen aggregation method. For ZO-Avg, we prove that it cannot effectively utilize information from multiple queries: under a fixed budget, any attempt to use more than one query per iteration is detrimental, and the optimal strategy is always the single-query approach. In contrast, the ZO-Align estimator combines multi-query information in a geometry-aware way that avoids wasting queries on redundant directions, and in strongly convex and stochastic problems multi-query ZO-Align is more query-efficient than any single-query scheme. We further analyze the regime where the ambient dimension is much larger than the query block size, showing that ZO-Avg continues to follow the same single-query allocation rule, while ZO-Align degenerates into a step direction that is nearly colinear with a suitably rescaled ZO-Avg update due to the near-orthogonality of random directions in high dimensions.

To complement our theoretical analysis, we conducted extensive experiments across a suite of optimization problems. The empirical results consistently validate our theoretical findings, illustrating the practical implications of the query allocation dichotomy. Our work clarifies that the central design choice is not about tuning an intermediate query size, but rather about committing to an aggregation method that dictates whether the optimal algorithm is a sequential single-query method or a full-subspace estimation procedure.

The primary contributions of this work are as follows:
\begin{itemize}
    \item We propose the ZO-Align estimator based on local surrogate minimization, providing a clear optimization-centric perspective that complements existing geometric interpretations.
    \item We deliver the first comprehensive convergence analysis that directly compares ZO-Avg and ZO-Align across strongly convex, convex, non-convex, and stochastic settings, revealing a fundamental and consistent dichotomy in their performance.
    \item We provide rigorous theoretical proofs that under a fixed query budget, the optimal query allocation for ZO-Avg is always single-query, whereas ZO-Align can make better use of multi-query information, resolving the query allocation question for these two classes of estimators.
    \item We empirically validate our theoretical findings across a range of optimization problems, and also include high-dimensional tasks that illustrates how our query-allocation insights manifest in large-scale problems.

\end{itemize}


\section{Related Work}
\label{sec:related_work}

\paragraph{ZO gradient estimation.}
Zeroth-order optimization approximates gradients using finite-difference techniques, a concept explored through various random perturbation strategies. For instance, theoretical convergence has been demonstrated using Gaussian random noise perturbations \citep{nesterov2017random}, while other approaches have utilized uniform sampling \citep{flaxman2004online} and coordinate-wise perturbations \citep{lian2016comprehensive}. A distinct line of research focuses on problems where the gradient is assumed to be sparse, framing the estimation task as one of compressed sensing; here, techniques like LASSO \citep{wang2018stochastic} or other forms of regularized regression \citep{choromanski2020provably} enable the reconstruction of a sparse gradient from a limited number of function queries \citep{cai2022zeroth}. 

\paragraph{Query-efficient ZO optimizers.}
 A central challenge in zeroth-order optimization is its high query complexity, which has motivated a broad range of techniques aimed at improving query efficiency. Some methods construct more accurate local models for the update direction by incorporating second-order information \citep{ye2025hessian} or by exploiting problem geometry via Riemannian optimization \citep{he2024riemannian}. Other approaches focus on optimizing the querying process itself, for example by using compressed sensing to achieve double-logarithmic query complexity in sparse settings \citep{qiu2024gradient}, adaptively reusing past queries to reduce variance \citep{xiao2023lazy}, or learning query directions through reinforcement learning \citep{zhai2024learning}. Despite their differences, these approaches share a common focus: they seek to improve the quality of the gradient estimate at each individual iteration, typically under a fixed number of queries per step.

In contrast, our work takes a complementary perspective. Rather than designing yet another estimator for a given per-iteration query budget, we ask how to optimally allocate a fixed total query budget across iterations and aggregation schemes. Our analysis shows that, even for two canonical multi-query estimators, this allocation problem exhibits a sharp dichotomy: the optimal query-per-iteration choice collapses to opposite extremes depending on the aggregation method used.
\section{ZO Estimators: From Single-Query to Multi-Query}
\label{sec:block_zo}

This section introduces the core ZO gradient estimators used in our analysis. We begin with the standard single-query estimator before presenting two multi-query approaches: the intuitive Naive Averaging (ZO-Avg) and the principled Projection Alignment (ZO-Align), which we derive from local surrogate minimization. Then we analyze their Mean Squared Error (MSE) to establish a fundamental difference in their accuracy, which underpins the main convergence results in Section~\ref{sec:query_allocation}.

\subsection{Preliminaries and the Single-Query Estimator}

We consider the unconstrained optimization problem $\min_{x \in \mathbb{R}^d} f(x)$ where direct access to the gradient $\nabla f(x)$ is unavailable. Throughout this work, we assume the objective function $f$ is differentiable and $L$-smooth: 
\begin{equation*}
    \|\nabla f(x) - \nabla f(y)\| \le L \|x - y\|, \quad \forall x, y \in \mathbb{R}^d.
\end{equation*}
ZO algorithms navigate this problem by iteratively updating a parameter vector $x_t$ using a gradient estimate $\hat{g}(x_t)$ constructed from function value queries: $x_{t+1} = x_t - \eta_t \hat{g}(x_t)$.

The foundational building block for most ZO methods is the single-query estimator. It approximates the gradient by evaluating the function along a single random direction $u \in \mathbb{R}^d$ using a \emph{one-sided} finite-difference scheme. A common form is given by:
\begin{equation*}
    \hat{g}_u(x) = \frac{f(x + \mu u) - f(x)}{\mu} u,
\end{equation*}
where $\mu > 0$ is a small smoothing parameter. Throughout this work, we focus on an idealized regime in which $\mu\to 0$ and the estimator approximates the projection of the true gradient onto the direction $u$: 
\begin{equation*}
    \hat{g}_u(x) \overset{\mu\to 0}{\approx} (u^T \nabla f(x)) u.
\end{equation*}
All MSE and convergence results below are derived under this idealization. Importantly, the same geometric analysis applies to \emph{two-sided} finite-difference estimators, which differ from the one-sided form only in the higher-order $\mu$-dependent terms \cite{nesterov2017random}.

While fundamental, this single-direction estimate is inherently noisy, limiting its effectiveness. A natural extension is to probe the function along multiple directions to construct a more accurate gradient estimate.

\subsection{Simple Averaging Estimator (ZO-Avg)}
\label{sec:zo_avg}

The most direct method for aggregating information from a block of $q$ queries is to compute $q$ independent single-query estimates and average them. Let $U = [u_1, \dots, u_q] \in \mathbb{R}^{d \times q}$ be a matrix whose columns are random direction vectors drawn independently from $\mathcal{N}(0, I)$. We let $q \le d$ and assume the columns of $U$ are linearly independent. The averaging estimator, which we term ZO-Avg, is defined as:
\begin{equation}\label{eq:zo-avg}
    \begin{aligned}
    \hat{g}_{\text{AVG}}(x) &= \frac{1}{q} \sum_{i=1}^{q} \frac{f(x + \mu u_i) - f(x)}{\mu} u_i \\
    &\overset{\mu\to 0}{\approx}\left( \frac{1}{q} \sum_{i=1}^{q} u_i u_i^T \right) \nabla f(x) = \frac{1}{q} UU^T \nabla f(x).
\end{aligned}
\end{equation}
This estimator is appealing due to its simplicity and straightforward parallel implementation. 




\subsection{Projection Alignment Method (ZO-Align)}
\label{sec:zo_align}

Instead of simply averaging, we adopt a more principled, model-based perspective. The $q$ queries provide a snapshot of the function's local geometry within the column space of $U$. We seek the best update step of the form $-Uy$ for some coefficient vector $y \in \mathbb{R}^q$ by minimizing a local surrogate of the objective function. Given that $f$ is $L$-smooth, we can form a local quadratic upper bound around the point $x$:
\begin{equation*}
    f(x - Uy) \le f(x) - \nabla f(x)^T (Uy) + \frac{L}{2} \|Uy\|^2.
\end{equation*}
Minimizing this quadratic upper bound with respect to $y$ is a principled way to find the best local step within the column space. This yields the optimal coefficient vector:
\begin{equation*}
    y^* = \frac{1}{L} (U^T U)^{-1} U^T \nabla f(x).
\end{equation*} The optimal update direction is therefore proportional to $U y^* = U(U^T U)^{-1} U^T \nabla f(x)$. This motivates our second estimator, which replaces the true directional derivatives with their finite-difference approximations:
\begin{equation}\label{eq:zo-align}
    \hat{g}_{{\text{ALG}}}(x) = U(U^TU)^{-1} \begin{bmatrix} \frac{f(x+\mu u_1)-f(x)}{\mu} \\ \vdots \\ \frac{f(x+\mu u_q)-f(x)}{\mu} \end{bmatrix}.
\end{equation}
The name ``Projection Alignment'' comes from the powerful geometric property of this estimator. The matrix $P_U = U(U^T U)^{-1} U^T$ is the orthogonal projection operator onto the column space of $U$. Therefore, assuming $\mu\to 0$, $\hat{g}_{\text{ALG}}$ is the orthogonal projection of the true gradient onto the query subspace. This means that, by construction, its own projection onto each query direction $u_i$ is identical to the estimated projection of the true gradient: $u_i^T \hat{g}_{\text{ALG}} \approx u_i^T \nabla f(x)$.

This reveals a crucial difference from ZO-Avg. While ZO-Align is constructed to be geometrically consistent within the sampled subspace, ZO-Avg is not. The projection of the ZO-Avg estimate onto a query direction $u_i$ does not generally match the true directional derivative, leading to a less accurate representation of the local geometry. 




\subsection{Accuracy Analysis: A Tale of Two Estimators}
\label{sec:acc_analysis}

To formalize this comparison, we analyze the MSE of each estimator, defined as $\mathbb{E}[\|\hat{g}(x) - \nabla f(x)\|^2]$, which captures the total error from both bias and variance.

\begin{proposition}
\label{prop:mse}
Let $g = \nabla f(x)$ and assume the query directions are i.i.d. samples from $\mathcal{N}(0, I)$. The Mean Squared Error of the idealized estimators are:
\begin{enumerate}
    \item For ZO-Avg: $\quad \text{MSE}(\hat{g}_{\text{AVG}}) = \frac{d+1}{q} \|g\|^2$.
    \item For ZO-Align: $\quad \text{MSE}(\hat{g}_{\text{ALG}}) = \frac{d-q}{d} \|g\|^2$.
\end{enumerate}
\end{proposition}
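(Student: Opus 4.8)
The plan is to treat the two estimators separately, exploiting in both cases the rotational invariance of the standard Gaussian, which lets me assume without loss of generality that $g = \|g\| e_1$. Since the MSE decomposes as bias-squared plus variance, the two computations differ precisely because ZO-Avg contributes only variance while ZO-Align contributes both.

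\textbf{ZO-Avg.} Since Proposition~\ref{prop:unbiasedness_zo_avg} gives unbiasedness, the MSE equals the total variance $\mathbb{E}[\|\hat g_{\text{AVG}} - g\|^2]$. Writing $\hat g_{\text{AVG}} - g = \frac1q \sum_{i=1}^q (u_i u_i^T g - g)$ and using that the $u_i$ are i.i.d.\ with $\mathbb{E}[u_i u_i^T g] = g$, the cross terms vanish by independence and mean-zero, so the MSE collapses to $\frac1q \mathbb{E}[\|u u^T g - g\|^2]$ for a single direction $u$. Expanding the square and using $\mathbb{E}[u u^T] = I$, the two outer terms combine to $-\|g\|^2$, leaving the fourth moment $\mathbb{E}[\|u u^T g\|^2] = \mathbb{E}[(u^T g)^2 \|u\|^2]$ as the only nontrivial piece. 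Taking $g = \|g\| e_1$ so that $(u^T g)^2 \|u\|^2 = \|g\|^2\, u_1^2 \sum_j u_j^2$, the expectation of $u_1^2 \sum_j u_j^2$ splits into $\mathbb{E}[u_1^4] = 3$ from the aligned coordinate plus $(d-1)\,\mathbb{E}[u_1^2]\mathbb{E}[u_j^2] = d-1$ from the orthogonal ones, giving $(d+2)\|g\|^2$. Substituting back yields $(d+1)\|g\|^2$ per query, hence $\frac{d+1}{q}\|g\|^2$.

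\textbf{ZO-Align.} Here $\hat g_{\text{ALG}} = P_U g$ with $P_U = U(U^T U)^{-1} U^T$ the orthogonal projector onto the column space of $U$. Since $I - P_U$ is symmetric idempotent, I would write $\|P_U g - g\|^2 = g^T (I - P_U) g$ and take expectations to obtain $\text{MSE} = g^T (I - \mathbb{E}[P_U]) g$. The crux is computing $\mathbb{E}[P_U]$. Because the columns of $U$ are i.i.d.\ standard Gaussian, the law of $U$ — and therefore of $P_U$ — is invariant under $U \mapsto Q U$ for any orthogonal $Q$, which forces $\mathbb{E}[P_U]$ to commute with every orthogonal matrix and hence (by a Schur-type argument) to be a scalar multiple of the identity, $\mathbb{E}[P_U] = c I$. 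Taking traces pins the scalar: $cd = \mathbb{E}[\operatorname{Tr}(P_U)] = \mathbb{E}[\operatorname{rank}(U)] = q$, using that the columns are almost surely linearly independent, so $c = q/d$. This gives $\text{MSE} = (1 - q/d)\|g\|^2 = \frac{d-q}{d}\|g\|^2$, and as a consistency check it also recovers the bias $\mathbb{E}[\hat g_{\text{ALG}}] = \frac{q}{d} g$ of Proposition~\ref{prop:biasedness_zo_align}.

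The main obstacle is the identity $\mathbb{E}[P_U] = \frac{q}{d} I$: although the symmetry argument is short, it must be justified carefully, namely that left multiplication by an orthogonal $Q$ preserves the Gaussian law of $U$, that a matrix commuting with all rotations is necessarily scalar, and that the trace identity correctly fixes the constant. For ZO-Avg the difficulty is instead bookkeeping: the $d+2$ (and thus the final $d+1$) arises entirely from separating the diagonal fourth-moment contribution from the off-diagonal second-moment contributions, so the diagonal-versus-orthogonal split must be tracked precisely.
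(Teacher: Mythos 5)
Your proof is correct and follows essentially the same route as the paper's: for ZO-Avg, unbiasedness reduces the MSE to $\tfrac{1}{q}$ times a single-direction variance governed by the Gaussian fourth moment $\mathbb{E}[(u^Tg)^2\|u\|^2]=(d+2)\|g\|^2$, and for ZO-Align the idempotence of $I-P_U$ gives $\mathrm{MSE}=g^T(I-\mathbb{E}[P_U])g$ with $\mathbb{E}[P_U]=\tfrac{q}{d}I$. The only difference is that you actually derive the two moment identities (the coordinate-wise fourth-moment split and $\mathbb{E}[P_U]=\tfrac{q}{d}I$ via rotational invariance, a Schur-type argument, and the trace/rank identity) where the paper cites them as known results, so your version is somewhat more self-contained.
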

\begin{proof}
    See Appendix~\ref{Proof:prop_mse}.
\end{proof}
For any $q > 1$, it is clear that $\frac{d-q}{d} < \frac{d+1}{q}$, indicating that ZO-Align offers a significantly lower estimation error. This fundamental difference in estimator quality is the primary driver of the performance gap we will establish in the following sections. It highlights that the principled, model-based approach of ZO-Align is far more effective at aggregating information from multiple queries than the naive averaging of ZO-Avg.


\section{Query Allocation Dichotomy}
\label{sec:query_allocation}


This section presents our main theoretical results, which establish a fundamental dichotomy in how multi-query ZO estimators should be utilized. 
Upon several standard settings, we analyze the convergence behavior of optimization algorithms using the ZO-Avg and ZO-Align estimators under a fixed total query budget. We first consider the deterministic setting, where we have exact function evaluations, for three classes of functions: $\mu$-strongly convex, convex, and non-convex. We then extend our analysis to the stochastic setting for convex functions, where function evaluations are corrupted by noise. Across all these settings, we show that the simple averaging of ZO-Avg is an inefficient mechanism for aggregating query information, making a fully sequential single-query method optimal. In direct contrast, the principled construction of ZO-Align effectively harnesses multi-query information, making the optimal strategy a full-subspace estimation.

\subsection{Query Allocation Problem}
Suppose we are given a fixed total query budget $K$. An algorithm performs a series of updates over $T$ iterations, and at each iteration $t$ it uses a set of $q_t$ function queries as stated in Section~\ref{sec:zo_avg}. The total number of queries is therefore constrained by $\sum_{t=0}^{T-1} q_t \le K$. Our central goal is to determine the optimal query allocation strategy---the sequence of queries per iteration, $\{q_t\}_{t=0}^{T-1}$---that minimizes the final optimization error.

\subsection{Optimal Query Allocation}
 
We begin by analyzing the case where function evaluations are deterministic. The update rule at iteration $t$ is $x_{t+1} = x_t - \eta_t \hat{g}(x_t)$, where $\hat{g}$ is either the ZO-Avg or ZO-Align estimator constructed with a set of $q_t$ query directions, as illustrated in Section~\ref{sec:block_zo}. Given an estimator with $q_t$ queries at iteration $t$, we choose the step size $\eta_t$ by minimizing the standard $L$-smooth upper bound on the expected decrease of $f$. The following proposition summarizes the resulting choices; these step sizes will be used throughout this section.

\begin{proposition}
\label{prop:optimal_stepsizes}
Let $f$ be an $L$-smooth function. Consider a single update of the form $x^{+} = x - \eta \hat{g}(x)$. In the idealized regime $\mu \to 0$, the following holds:

\textbf{(ZO-Avg}) If $\hat{g} = \hat{g}_{\text{AVG}}$ is the averaging estimator in \eqref{eq:zo-avg}, then the step size that maximizes the guaranteed decrease is
    \begin{equation}\label{eq:optimal_step_size_zoavg}
        \eta_{\text{AVG}}^*(q)
        = \frac{q}{L(q + d + 1)}.
    \end{equation}
(\textbf{ZO-Align}) If $\hat{g} = \hat{g}_{\text{ALG}}$ is the projection-alignment estimator in \eqref{eq:zo-align}, then step size that maximizes the guaranteed decrease in this bound is
    \begin{equation}\label{eq:optimal_step_size_zoalign}
        \eta_{\text{ALG}}^*(q)
        = \frac{1}{L}.
    \end{equation}
\end{proposition}
\begin{proof}
    See Appendix~\ref{Proof:prop_optimal_stepsizes}.
\end{proof}

\subsubsection{Strongly Convex Case}

A function is defined as $\gamma$-strongly convex if there exists a constant $\gamma$ such that for all $x$ and $y$ in its domain: 
\begin{equation*}
    f(y)\geq f(x)+\nabla f(x)^T(y-x) +\frac{\gamma}{2}\|y-x\|^2.
\end{equation*}
When the objective function $f(x)$ is both $L$-smooth and $\gamma$-strongly convex, we can achieve a linear convergence rate~\citep{boyd2004convex}. However, the rate's dependence on the number of queries per iteration, $q_t$, reveals the core dichotomy of our work. We first analyze the ZO-Avg estimator. 

\begin{theorem}
\label{thm:zo_avg_strong_convex}
Let $f$ be an $L$-smooth and $\gamma$-strongly convex function. The ZO-Avg algorithm with the optimal step size stated in \eqref{eq:optimal_step_size_zoavg} achieves the following convergence guarantee:
\begin{equation}
\begin{split}
    &\quad\mathbb{E}[f(x_{T}) - f(x^*)] \\ 
\le &\left( \prod_{t=0}^{T-1} \left(1 - \frac{\gamma q_t}{L(q_t+d+1)}\right) \right) (f(x_0) - f(x^*)),
\end{split}
\end{equation}
where $x^*$ represent the optimum. Under a fixed total query budget $K$, this convergence rate is optimized by choosing $q_t=1$ for all $t=0, \dots, K-1$, resulting in $T=K$ iterations and a rate of:
\begin{equation}
\mathbb{E}[f(x_K) - f(x^*)] \le \left(1 - \frac{\gamma}{L(d+2)}\right)^K (f(x_0) - f(x^*))
\end{equation}
\end{theorem}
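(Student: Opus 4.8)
The plan is to decouple the statement into two essentially independent parts: (i) establishing the per-iteration contraction that yields the product bound, and (ii) solving the resulting combinatorial allocation problem to identify $q_t \equiv 1$ as optimal.

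For part (i), I would start from the descent lemma supplied by $L$-smoothness applied to $x_{t+1} = x_t - \eta_t \hat g_{\text{AVG}}(x_t)$, and take the conditional expectation over the query directions drawn at step $t$. Unbiasedness (Proposition~\ref{prop:unbiasedness_zo_avg}) collapses the linear term to $-\eta_t\|\nabla f(x_t)\|^2$, while the quadratic term needs the second moment. This is exactly where the accuracy analysis enters: combining the MSE identity $\mathbb{E}[\|\hat g_{\text{AVG}} - g\|^2] = \frac{d+1}{q}\|g\|^2$ from Proposition~\ref{prop:mse} with unbiasedness gives $\mathbb{E}[\|\hat g_{\text{AVG}}(x_t)\|^2] = \frac{q_t+d+1}{q_t}\|\nabla f(x_t)\|^2$. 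Substituting produces a guaranteed decrease proportional to $\eta_t\bigl(1 - \tfrac{L\eta_t}{2}\cdot\tfrac{q_t+d+1}{q_t}\bigr)\|\nabla f(x_t)\|^2$; maximizing this scalar expression over $\eta_t$ yields precisely the stated step size $\eta_t = \frac{q_t}{L(q_t+d+1)}$ and a clean per-step decrease of $\frac{q_t}{2L(q_t+d+1)}\|\nabla f(x_t)\|^2$. Finally, the Polyak--{\L}ojasiewicz inequality $\|\nabla f(x_t)\|^2 \ge 2\mu\,(f(x_t)-f(x^*))$, which follows from $\mu$-strong convexity, converts this into the one-step contraction $\mathbb{E}[f(x_{t+1})-f(x^*)\mid x_t] \le \bigl(1 - \tfrac{\mu q_t}{L(q_t+d+1)}\bigr)(f(x_t)-f(x^*))$, and chaining over $t$ via the tower property gives the product bound.

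For part (ii), write $\rho(q) := 1 - \frac{\mu q}{L(q+d+1)}$, note each factor lies in $(0,1)$ so exhausting the budget is never harmful, and reduce the whole problem to the single inequality $\rho(q) > \rho(1)^q$ for integer $q \ge 2$ (so that breaking any block into single queries strictly shrinks the product). The key is the reparametrization $\rho(q) = \frac{sq+c}{q+c}$ with $c = d+1$ and $s = 1 - \mu/L \in [0,1)$, and then studying $F(q) := \log\rho(q) - q\log\rho(1)$. One verifies directly that $F(0) = F(1) = 0$, while $F''(q) = \frac{1}{(q+c)^2} - \frac{s^2}{(sq+c)^2} > 0$ holds exactly because $s < 1$ (equivalently $c(1-s) > 0$). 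A strictly convex function vanishing at the two points $0$ and $1$ is strictly positive for every $q > 1$, which is the desired $\rho(q) > \rho(1)^q$.

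The main obstacle is precisely this convexity lemma. The tempting elementary route -- bounding $\frac{q}{q+d+1} \le \frac{q}{d+2}$ and then applying Bernoulli's inequality to $\rho(1)^q$ -- pushes both quantities against the same lower bound $1 - \frac{\mu q}{L(d+2)}$ and therefore fails to separate $\rho(q)$ from $\rho(1)^q$. What rescues the argument is isolating $s<1$ as the sole driver of strict convexity and exploiting the fortunate second zero of $F$ at $q=0$. Once the lemma is secured the conclusion is immediate: any feasible allocation obeys $\prod_{t} \rho(q_t) \ge \rho(1)^{\sum_t q_t} \ge \rho(1)^{K}$, with equality attained only at $T=K$ and $q_t \equiv 1$; since $\rho(1) = 1 - \frac{\mu}{L(d+2)}$, this is exactly the claimed optimal rate.
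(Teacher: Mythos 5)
Your proof is correct, and part (i) — descent lemma, unbiasedness plus the second moment $\mathbb{E}[\|\hat g_{\text{AVG}}\|^2]=\frac{q_t+d+1}{q_t}\|\nabla f(x_t)\|^2$, the optimal step size, the PL inequality, and chaining — is essentially identical to the paper's. Where you genuinely diverge is part (ii), the allocation argument. The paper asserts that minimizing $\prod_t(1-\rho_t)$ is equivalent to maximizing $\sum_t \rho_t$, and then invokes concavity of $h(q)=q/(q+d+1)$ together with the fact that $h(q)/q$ is decreasing to conclude $q_t\equiv 1$. That reduction is loose: $\log\prod_t(1-\rho_t)=\sum_t\log(1-\rho_t)$, and when the number of factors $T$ varies, maximizing $\sum_t\rho_t$ only controls the upper bound $e^{-\sum_t\rho_t}$ rather than the product itself. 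Your route instead proves the sharp pointwise inequality $\rho(q)>\rho(1)^q$ for $q>1$ via the reparametrization $\rho(q)=\frac{sq+c}{q+c}$ and the strict convexity of $F(q)=\log\rho(q)-q\log\rho(1)$ with its two zeros at $q=0$ and $q=1$ (I checked: $F''(q)=\frac{1}{(q+c)^2}-\frac{s^2}{(sq+c)^2}>0$ reduces exactly to $c(1-s)>0$). This immediately yields $\prod_t\rho(q_t)\ge\rho(1)^{\sum_t q_t}\ge\rho(1)^K$ with equality only at $q_t\equiv 1$, which is a cleaner and strictly more rigorous justification of the optimality claim than the paper's; your observation that the naive Bernoulli bound cannot separate the two quantities is also apt. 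The only thing worth stating explicitly is that $s=1-\mu/L\in[0,1)$ because $\mu\le L$ for any $L$-smooth, $\mu$-strongly convex function, which you implicitly rely on for both positivity of $\rho$ and strict convexity of $F$.
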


\begin{proof}
    See Appendix~\ref{Proof:thm_zo_avg_strongly_convex}.
\end{proof}

The result in Theorem~\ref{thm:zo_avg_strong_convex} suggests that when using the simple averaging estimator, allocating more than one query per iteration is detrimental to the convergence rate. The core reason lies in the trade-off between variance reduction and the cost of queries. While increasing $q_t$ does reduce the variance of the gradient estimate, the improvement scales sublinearly $1/q_t$, as shown in Section~\ref{sec:acc_analysis}. However, the cost increases linearly with $q_t$. The analysis shows that the marginal benefit of reducing variance is not worth the cost of the additional queries. It is always better to spend the query budget on taking more, albeit noisier, steps. This exposes a fundamental flaw in the simple average strategy: it is an inefficient use of a limited query budget.

In stark contrast, the principled structure of the ZO-Align estimator leads to the opposite conclusion.


\begin{theorem}
\label{thm:zo_align_strong_convex}
Let $f$ be an $L$-smooth and $\gamma$-strongly convex function. The ZO-Align algorithm with the optimal step size $\eta_t = 1/L$ achieves the following convergence guarantee:
\begin{equation}
\mathbb{E}[f(x_{T}) - f(x^*)] \le \left( \prod_{t=0}^{T-1} \left(1 - \frac{\gamma q_t}{Ld}\right) \right) (f(x_0) - f(x^*)),
\end{equation}
where $x^*$ represent the optimum. Under a fixed total query budget $K$, this convergence rate is optimized by using as few iterations as possible with the maximum number of queries, i.e., $q_t=d$ for $T=K/d$ iterations. This yields the rate:
\begin{equation}
\mathbb{E}[f(x_{K/d}) - f(x^*)] \le \left(1 - \frac{\gamma}{L}\right)^{K/d} (f(x_0) - f(x^*)).
\end{equation}
\end{theorem}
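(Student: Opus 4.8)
The plan is to first establish a single-iteration contraction and then chain it multiplicatively, after which the allocation claim reduces to a one-dimensional monotonicity argument. Working in the idealized regime $\mu \to 0$, the estimator is the orthogonal projection $\hat{g}_{\text{ALG}}(x_t) = P_{U}\nabla f(x_t)$ with $P_U = U(U^TU)^{-1}U^T$, so $P_U$ is symmetric and idempotent. Writing $g = \nabla f(x_t)$ and substituting the update $x_{t+1} = x_t - \eta_t P_U g$ into the $L$-smoothness upper bound gives $f(x_{t+1}) \le f(x_t) - \eta_t\, g^T P_U g + \tfrac{L\eta_t^2}{2}\|P_U g\|^2$. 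Using $g^T P_U g = \|P_U g\|^2$ (idempotence), the right-hand side becomes $f(x_t) - (\eta_t - \tfrac{L\eta_t^2}{2})\|P_U g\|^2$, whose coefficient is maximized at $\eta_t = 1/L$; this is precisely why $1/L$ is the optimal step, and it yields $f(x_{t+1}) \le f(x_t) - \tfrac{1}{2L}\|P_U g\|^2$.

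Next I would take expectation over the random directions $U_t$ conditioned on $x_t$. Since the columns of $U_t$ are i.i.d.\ $\mathcal{N}(0,I)$, the column space is a uniformly random $q_t$-dimensional subspace, so by rotational invariance $\mathbb{E}[P_{U_t}] = \tfrac{q_t}{d} I$, the same fact underlying Proposition~\ref{prop:biasedness_zo_align}. Hence $\mathbb{E}[\|P_{U_t} g\|^2] = g^T \mathbb{E}[P_{U_t}] g = \tfrac{q_t}{d}\|g\|^2$, giving $\mathbb{E}[f(x_{t+1})\mid x_t] \le f(x_t) - \tfrac{q_t}{2Ld}\|\nabla f(x_t)\|^2$. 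Applying the gradient-domination (Polyak--{\L}ojasiewicz) inequality $\|\nabla f(x)\|^2 \ge 2\mu(f(x)-f(x^*))$, which follows from $\mu$-strong convexity, and subtracting $f(x^*)$ produces the per-iteration contraction $\mathbb{E}[f(x_{t+1})-f(x^*)\mid x_t] \le \big(1-\tfrac{\mu q_t}{Ld}\big)\big(f(x_t)-f(x^*)\big)$. Taking total expectations and iterating via the tower property over $t = 0,\dots,T-1$ yields the stated product bound.

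For the allocation claim, set $c = \mu/(Ld)$ and minimize $\prod_{t}(1-cq_t)$, equivalently $\sum_t \log(1-cq_t)$, subject to $\sum_t q_t = K$ and $1 \le q_t \le d$. The idea is to charge each chunk's log-factor to its query cost: writing $\log(1-cq_t) = q_t\, h(q_t)$ with $h(q) = \tfrac{1}{q}\log(1-cq)$, I would show $h$ is strictly decreasing on $(0,d]$. A short computation reduces $h'(q)<0$ to the inequality $\tfrac{-z}{1-z} - \log(1-z) < 0$ for $z = cq \in (0,1)$, which holds because that expression vanishes at $z=0$ and has negative derivative $-z/(1-z)^2$. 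Monotonicity gives $h(q_t) \ge h(d)$ for every $q_t \le d$, hence $\sum_t q_t h(q_t) \ge h(d)\sum_t q_t = \tfrac{K}{d}\log(1-cd)$, with equality exactly when all $q_t = d$. Since $cd = \mu/L$, the minimal product is $(1-\mu/L)^{K/d}$, proving optimality of the full-subspace allocation (and consistent with Bernoulli's inequality $(1-c)^d \ge 1-cd$).

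I expect the allocation step to be the main obstacle: it requires both the monotonicity of $h$ (which needs $cq<1$, guaranteed by $\mu \le L$) and the per-query charging argument that collapses a joint optimization over all $\{q_t\}$ and the free horizon $T$ into the clean bound $h(q_t) \ge h(d)$. The descent portion is routine once the idempotence of $P_U$ and the identity $\mathbb{E}[P_U]=\tfrac{q}{d}I$ are in place.
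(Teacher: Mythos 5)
Your proof is correct, and the descent half is essentially the paper's argument: both reduce to the one-step bound $\mathbb{E}[f(x_{t+1})\mid x_t]\le f(x_t)-\tfrac{q_t}{2Ld}\|\nabla f(x_t)\|^2$ via $\mathbb{E}[P_{U_t}]=\tfrac{q_t}{d}I$ and then invoke strong convexity; your route through the pathwise identity $g^TP_Ug=\|P_Ug\|^2$ is just a slightly cleaner way of obtaining the same two moments the paper cites. The allocation half, however, is genuinely different. The paper argues that $\prod_t(1-cq_t)$ is a concave function of the allocation and that a concave function is minimized at an extreme point of the feasible simplex, concluding that the budget should be concentrated into $q_t=d$ blocks. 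That concavity claim is actually suspect --- already for two variables the Hessian of $(1-cq_1)(1-cq_2)$ has eigenvalues $\pm c^2$, so the product of linear factors is indefinite rather than concave, and the ``vertex'' description is informal given that $T$ itself is a free variable. Your charging argument avoids all of this: writing $\log(1-cq_t)=q_t\,h(q_t)$ with $h(q)=\tfrac{1}{q}\log(1-cq)$ and proving $h$ strictly decreasing (your reduction to $\tfrac{-z}{1-z}-\log(1-z)<0$ is correct) gives $\sum_t q_t h(q_t)\ge h(d)\sum_t q_t=\tfrac{K}{d}\log(1-cd)$ uniformly over all horizons $T$ and all feasible allocations, with equality iff every $q_t=d$. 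This is a rigorous, self-contained proof of the optimality claim and arguably repairs the weakest step of the paper's own argument; the only caveats are the harmless edge cases you already flag ($cd=\mu/L=1$, where the bound is trivially zero, and the implicit assumptions that the full budget is spent and that $d$ divides $K$, both of which the paper also makes).
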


\begin{proof}
    See Appendix~\ref{Proof:thm_zo_align_strongly_convex}.
\end{proof}

For ZO-Align, the optimal strategy is to use the maximum number of queries per step. The intuition is that ZO-Align does not simply average away noise; it constructs a principled, low-error projection of the gradient onto the query subspace. Optimization error is reduced multiplicatively at each step, meaning a more accurate gradient provides an exponential advantage over time. It is therefore more effective to spend the budget forming a single high-quality step than to take many uncertain steps. 

\subsubsection{Convex Case}
We then analyze the case where the objective function $f(x)$ is convex and $L$-smooth, but not necessarily strongly convex. We demonstrate that the optimal query allocation strategy is fundamentally different for the two estimators, reinforcing the core dichotomy of our work.

\begin{theorem}
\label{thm:zo_avg_convex}
Let $f$ be an $L$-smooth and convex function. The ZO-Avg algorithm with step size stated in \eqref{eq:optimal_step_size_zoavg} achieves the following convergence guarantee after $T$ iterations:
\begin{equation}
\begin{split}
     E[f(x_T) - f(x_*)]
     \le \frac{L(\|x_0 - x_*\|^2 + \frac{2}{L}(f(x_0) - f(x_*)))}{\sum_{t=0}^{T-1} \frac{2q_t}{q_t+d+1}}.
\end{split}
\end{equation}
Under a fixed total query budget $K$, this convergence rate is optimized by choosing $q_t=1$ for all $t$, resulting in $T=K$ iterations and a rate of:
\begin{equation}
\begin{split}
    &\quad E[f(x_K) - f(x_*)]\\
    &\le \frac{L(d+2)(\|x_0 - x_*\|^2 + \frac{2}{L}(f(x_0) - f(x_*)))}{2K}.
\end{split}
\end{equation}
\end{theorem}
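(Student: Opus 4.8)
The plan is to run a smooth–convex descent analysis, but with \emph{two} coupled recursions: one tracking the function value (from the descent lemma) and one tracking the distance to the optimum (from convexity). Two recursions are needed because, as I explain below, the naive distance-only argument collapses for this step size.

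First I would record the second moment of the estimator. Since ZO-Avg is unbiased (Proposition~\ref{prop:unbiasedness_zo_avg}) and $\mathrm{MSE}(\hat g_{\mathrm{AVG}}) = \frac{d+1}{q}\|g\|^2$ (Proposition~\ref{prop:mse}), the bias–variance decomposition gives $\mathbb{E}[\|\hat g_{\mathrm{AVG}}(x)\|^2] = \|g\|^2 + \mathrm{MSE} = \frac{q+d+1}{q}\|g\|^2$, writing $g=\nabla f(x)$. Applying $L$-smoothness to $x_{t+1}=x_t-\eta_t\hat g_t$ and taking conditional expectation then yields
\begin{equation*}
\mathbb{E}[f(x_{t+1})\mid x_t] \le f(x_t) - \Big(\eta_t - \tfrac{L\eta_t^2}{2}\tfrac{q_t+d+1}{q_t}\Big)\|g_t\|^2 .
\end{equation*}
Plugging in $\eta_t=\frac{q_t}{L(q_t+d+1)}$ makes the bracket collapse to exactly $\frac{\eta_t}{2}$, so the per-step decrease is $\frac{\eta_t}{2}\|g_t\|^2$. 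In parallel, expanding $\|x_{t+1}-x_*\|^2$, taking expectation, and using convexity $\nabla f(x_t)^T(x_t-x_*)\ge f(x_t)-f(x_*)$ together with the same second-moment bound gives
\begin{equation*}
\mathbb{E}[\|x_{t+1}-x_*\|^2\mid x_t] \le \|x_t-x_*\|^2 - 2\eta_t\big(f(x_t)-f(x_*)\big) + \tfrac{\eta_t}{L}\|g_t\|^2 ,
\end{equation*}
where I used that $\eta_t^2\frac{q_t+d+1}{q_t}=\frac{\eta_t}{L}$ for this step size.

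The crux is eliminating the leftover $\|g_t\|^2$ term. The tempting move---bounding $\|g_t\|^2\le 2L(f(x_t)-f(x_*))$ via co-coercivity---makes the two function-value terms cancel exactly and yields only boundedness, no rate. Instead I would feed in the descent inequality: since $\frac{\eta_t}{2}\|g_t\|^2 \le f(x_t)-\mathbb{E}[f(x_{t+1})\mid x_t]$, we get $\frac{\eta_t}{L}\|g_t\|^2 \le \frac{2}{L}\big(f(x_t)-\mathbb{E}[f(x_{t+1})\mid x_t]\big)$. Substituting this, taking total expectation, and writing $\delta_t=\mathbb{E}[f(x_t)-f(x_*)]$, $R_t=\mathbb{E}[\|x_t-x_*\|^2]$ produces the telescoping bound $2\eta_t\delta_t \le (R_t-R_{t+1}) + \frac{2}{L}(\delta_t-\delta_{t+1})$. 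Summing over $t=0,\dots,T-1$, dropping the nonnegative $R_T$ and $\delta_T$ on the right, and using that $\delta_t$ is monotonically nonincreasing (immediate from the descent inequality) to lower-bound the left by $\delta_T\sum_t 2\eta_t$ gives the first displayed bound after substituting $2\eta_t=\frac{2q_t}{L(q_t+d+1)}$.

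Finally, for the allocation claim I would maximize the denominator $\sum_t \frac{2q_t}{q_t+d+1}$ under $\sum_t q_t\le K$. The key observation is that the per-query gain $\frac{1}{q}\cdot\frac{2q}{q+d+1}=\frac{2}{q+d+1}$ is strictly decreasing in $q$, so $\frac{2q_t}{q_t+d+1}=q_t\cdot\frac{2}{q_t+d+1}\le q_t\cdot\frac{2}{d+2}$ for every $q_t\ge 1$; summing gives $\sum_t \frac{2q_t}{q_t+d+1}\le \frac{2K}{d+2}$, with equality iff every $q_t=1$ (hence $T=K$). Substituting this maximal denominator yields the stated $O((d+2)/K)$ rate. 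The main obstacle is the coupling step: recognizing that the residual gradient-norm term must be absorbed through the descent \emph{decrement} rather than through the standard $\|g\|^2\le 2L(f-f_*)$ inequality, which is precisely what produces the extra $\frac{2}{L}(f(x_0)-f(x_*))$ summand in the numerator.
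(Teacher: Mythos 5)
Your proposal is correct and follows essentially the same route as the paper's proof: the distance-to-optimum recursion with the residual $\|g_t\|^2$ term absorbed via the descent decrement $\tfrac{\eta_t}{2}\|g_t\|^2 \le f(x_t)-\mathbb{E}[f(x_{t+1})\mid x_t]$, then telescoping and monotonicity of $\delta_t$, and finally maximizing $\sum_t \tfrac{2q_t}{q_t+d+1}$ under the budget. Your bookkeeping is in fact slightly cleaner: you carry the consistent factor $\tfrac{2}{L}$ throughout (the paper's intermediate displays write $\tfrac{1}{2L}$, which disagrees with its own final bound), and your one-line allocation inequality $\tfrac{2q_t}{q_t+d+1}\le \tfrac{2q_t}{d+2}$ with equality iff $q_t=1$ is a more direct substitute for the paper's concavity-plus-Jensen discussion.
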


\begin{proof}
    See Appendix~\ref{Proof:thm_zo_avg_convex}.
\end{proof}

Theorem~\ref{thm:zo_avg_convex} reinforces the conclusion from the strongly convex case: for the ZO-Avg estimator, using multiple queries per step is an inefficient use of the query budget. Next, we show that for the ZO-Align estimator, the conclusion remains the different.

\begin{theorem}
\label{thm:zo_align_convex}
Let $f$ be an $L$-smooth and convex function. The ZO-Align algorithm with step size $\eta_t=1/L$ achieves the following rate:
\begin{equation}
\begin{split}
    &\quad E[f(x_T) - f(x_*)] \\
    &\le \frac{d}{2\sum_{t=0}^{T-1} q_t} (L\|x_0 - x_*\|^2 + 2(f(x_0) - f(x_*)))
\end{split}
    \end{equation}
Under any allocation $\{q_t\}$ such that $\sum_{t=0}^{T-1} q_t=K$, ZO-Align yields the rate:
\begin{equation}
    E[f(x_T) - f(x_*)] \le \frac{d}{2K} (L\|x_0 - x_*\|^2 + 2(f(x_0) - f(x_*))) 
\end{equation}

\end{theorem}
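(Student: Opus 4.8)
The plan is to run the standard potential-function analysis for smooth convex functions, but to track carefully the randomness introduced by the projection and to exploit the self-consistency of ZO-Align. Write $g_t = \nabla f(x_t)$ and (in the idealized $\mu \to 0$ regime) $\hat{g}_t = P_{U_t} g_t$ with $P_{U_t} = U_t(U_t^T U_t)^{-1}U_t^T$ the orthogonal projector onto the query subspace. Two facts drive the whole argument. First, since $P_{U_t}$ is an orthogonal projection ($P_{U_t}^2 = P_{U_t} = P_{U_t}^T$), we have the exact identity $\langle g_t, \hat{g}_t\rangle = g_t^T P_{U_t} g_t = \|\hat{g}_t\|^2$. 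Second, because the columns of $U_t$ are i.i.d. Gaussian, the range of $U_t$ is a uniformly random $q_t$-dimensional subspace, so by rotational invariance $\mathbb{E}[P_{U_t}]$ commutes with every orthogonal matrix and is therefore a multiple of the identity; taking traces and using $\mathrm{tr}(P_{U_t}) = q_t$ gives $\mathbb{E}[P_{U_t}] = \frac{q_t}{d}I$, recovering the bias factor of Proposition~\ref{prop:biasedness_zo_align}.

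First I would establish a per-step descent inequality from $L$-smoothness. Plugging $\eta_t = 1/L$ and the projection identity into the quadratic upper bound yields, pointwise in $U_t$, $f(x_{t+1}) \le f(x_t) - \frac{1}{L}\langle g_t, \hat{g}_t\rangle + \frac{1}{2L}\|\hat{g}_t\|^2 = f(x_t) - \frac{1}{2L}\|\hat{g}_t\|^2$. In particular $f(x_t)$ is non-increasing along every trajectory, a fact I will reserve for the very end. Next I would write the distance recursion $\|x_{t+1}-x_*\|^2 = \|x_t - x_*\|^2 - \frac{2}{L}\langle \hat{g}_t, x_t - x_*\rangle + \frac{1}{L^2}\|\hat{g}_t\|^2$, condition on $x_t$, and take expectation over $U_t$. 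Using $\mathbb{E}[P_{U_t}] = \frac{q_t}{d}I$ gives $\mathbb{E}[\langle \hat{g}_t, x_t - x_*\rangle \mid x_t] = \frac{q_t}{d}\langle g_t, x_t - x_*\rangle \ge \frac{q_t}{d}(f(x_t)-f(x_*))$ by convexity, together with $\mathbb{E}[\|\hat{g}_t\|^2 \mid x_t] = \frac{q_t}{d}\|g_t\|^2$.

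The key maneuver is to eliminate the troublesome $\|g_t\|^2$ term rather than bound it crudely by $2L(f(x_t)-f(x_*))$, which would cancel the useful term and leave a vacuous bound. Taking the conditional expectation of the descent inequality shows $\frac{1}{2L}\mathbb{E}[\|\hat{g}_t\|^2 \mid x_t] \le f(x_t) - \mathbb{E}[f(x_{t+1})\mid x_t]$, so the $\frac{1}{L^2}\mathbb{E}[\|\hat{g}_t\|^2\mid x_t]$ term appearing in the distance recursion is at most $\frac{2}{L}(f(x_t)-\mathbb{E}[f(x_{t+1})\mid x_t])$. Substituting, rearranging, and taking full expectation produces $\frac{2q_t}{Ld}\,\mathbb{E}[f(x_t)-f(x_*)] \le \mathbb{E}\|x_t-x_*\|^2 - \mathbb{E}\|x_{t+1}-x_*\|^2 + \frac{2}{L}\big(\mathbb{E}f(x_t) - \mathbb{E}f(x_{t+1})\big)$. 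Summing over $t=0,\dots,T-1$ telescopes both bracketed differences, and discarding the nonnegative terminal terms leaves $\sum_{t=0}^{T-1} q_t\,\mathbb{E}[f(x_t)-f(x_*)] \le \frac{d}{2}\big(L\|x_0-x_*\|^2 + 2(f(x_0)-f(x_*))\big)$.

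Finally I would invoke the pointwise monotonicity $f(x_T) \le f(x_t)$ for $t \le T$, so $\mathbb{E}[f(x_T)-f(x_*)] \le \mathbb{E}[f(x_t)-f(x_*)]$; weighting by $q_t$ and summing yields $\big(\sum_t q_t\big)\mathbb{E}[f(x_T)-f(x_*)]$ on the left-hand side, and dividing gives the stated rate. Setting $\sum_t q_t = K$ immediately produces the budget bound, and since the right-hand side depends on $\{q_t\}$ only through $K$, every allocation is equally optimal, which is precisely the contrast with ZO-Avg. The main obstacle I anticipate is the coupling step: one must resist the natural urge to bound $\|g_t\|^2$ directly and instead feed the smoothness descent back into the distance recursion so that the gradient-norm term telescopes as a function-value difference; getting this bookkeeping right, together with confirming $\mathbb{E}[P_{U_t}]=\frac{q_t}{d}I$, is the crux of the argument.
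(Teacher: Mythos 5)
Your proposal is correct and follows essentially the same route as the paper's proof: the distance recursion with $\mathbb{E}[P_{U_t}]=\tfrac{q_t}{d}I$, the substitution of the smoothness descent inequality to convert the gradient-norm term into a telescoping function-value difference, and the monotonicity of $f(x_t)$ to extract $\mathbb{E}[f(x_T)-f(x_*)]$. Your bookkeeping even carries the factor $\tfrac{2}{L}$ consistently (the paper's intermediate display writes $\tfrac{1}{L}$ but its final bound agrees with yours), so there is nothing to correct.
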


\begin{proof}
    See Appendix~\ref{Proof:thm_zo_align_convex}.
\end{proof}

The ZO-Align rate depends only on the total number of queries $K$, not on how they are allocated across iterations. This key distinction stems directly from the mathematical properties of the estimators. The per-iteration progress for ZO-Align is linear in the number of queries. However for ZO-Avg, it is a nonlinear concave function. 

\subsubsection{Non-Convex Case}
Then we proceed to non-convex setting, where the objective is only assumed to be $L$-smooth. In this scenario, the goal is typically to find a first-order stationary point~\citep{nocedal2006numerical}, i.e., a point $x$ where $\|\nabla f(x)\|^2 \le \epsilon$. The convergence is measured by the number of queries required to achieve this goal. The analysis reveals the same fundamental dichotomy between the two estimators.

\begin{theorem}
\label{thm:zo_avg_nonconvex}
Let $f$ be an $L$-smooth function. The ZO-Avg algorithm with step size $\eta_t = \frac{q_t}{L(q_t+d+1)}$ ensures that:
\begin{equation}
\min_{t=0,...,T-1}\mathbb{E}[\|\nabla f(x_t)\|^2] \le \frac{2L(f(x_0)-f^*)}{\sum_{t=0}^{T-1} \frac{q_t}{q_t+d+1}},
\end{equation}
where $f^*$ is the minimum value of $f$. Under a fixed total query budget $K$, this rate is optimized by choosing $q_t=1$ for all $t$, which gives $T=K$ iterations and yields:
\begin{equation}
\min_{t=0,...,K-1}\mathbb{E}[\|\nabla f(x_t)\|^2] \le \frac{2L(d+2)(f(x_0)-f^*)}{K} 
\end{equation}
\end{theorem}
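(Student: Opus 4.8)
The plan is to combine the standard descent lemma for $L$-smooth functions with the second-moment bound on the ZO-Avg estimator implied by Proposition~\ref{prop:mse}, and then recast the query-budget optimization as maximizing a per-query efficiency term.

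First, I would invoke $L$-smoothness along the update $x_{t+1}=x_t-\eta_t\hat{g}_{\text{AVG}}(x_t)$ to obtain
\[
f(x_{t+1}) \le f(x_t) - \eta_t \nabla f(x_t)^T \hat{g}_{\text{AVG}}(x_t) + \frac{L\eta_t^2}{2}\|\hat{g}_{\text{AVG}}(x_t)\|^2.
\]
Conditioning on $x_t$ and taking expectation, I would substitute the unbiasedness $\mathbb{E}[\hat{g}_{\text{AVG}}(x_t)\mid x_t]=\nabla f(x_t)$ from Proposition~\ref{prop:unbiasedness_zo_avg} into the linear term, and the second moment $\mathbb{E}[\|\hat{g}_{\text{AVG}}(x_t)\|^2\mid x_t]=\frac{q_t+d+1}{q_t}\|\nabla f(x_t)\|^2$ — obtained by adding $\|\nabla f(x_t)\|^2$ to the ZO-Avg MSE of Proposition~\ref{prop:mse} — into the quadratic term. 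This yields a per-iteration descent of the form $\mathbb{E}[f(x_{t+1})\mid x_t]\le f(x_t)-\eta_t\big(1-\tfrac{L\eta_t(q_t+d+1)}{2q_t}\big)\|\nabla f(x_t)\|^2$.

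The crucial simplification comes from the prescribed step size: plugging in $\eta_t=\frac{q_t}{L(q_t+d+1)}$ makes the bracketed factor equal exactly $\tfrac{1}{2}$, collapsing the bound to $\mathbb{E}[f(x_{t+1})\mid x_t]\le f(x_t)-\frac{q_t}{2L(q_t+d+1)}\|\nabla f(x_t)\|^2$. Taking total expectations, rearranging, and telescoping over $t=0,\dots,T-1$ gives
\[
\sum_{t=0}^{T-1}\frac{q_t}{2L(q_t+d+1)}\,\mathbb{E}[\|\nabla f(x_t)\|^2]\le f(x_0)-\mathbb{E}[f(x_T)]\le f(x_0)-f^*.
\]
Lower-bounding each $\mathbb{E}[\|\nabla f(x_t)\|^2]$ by the running minimum $\min_{t}\mathbb{E}[\|\nabla f(x_t)\|^2]$ and dividing through by the positive coefficient sum produces the first claimed inequality.

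For the budget-optimal allocation, I would write $\frac{q_t}{q_t+d+1}=q_t\cdot\frac{1}{q_t+d+1}$ and note that for any integer $q_t\ge1$ we have $\frac{1}{q_t+d+1}\le\frac{1}{d+2}$, with equality iff $q_t=1$. Hence under $\sum_t q_t=K$, $\sum_t\frac{q_t}{q_t+d+1}\le\frac{1}{d+2}\sum_t q_t=\frac{K}{d+2}$, and this maximum of the denominator is attained only by $q_t=1$ for every $t$ (so $T=K$). Substituting this into the general bound yields the second inequality. I expect the main point here to be conceptual rather than a technical obstacle: the marginal progress-per-query $\frac{1}{q_t+d+1}$ is strictly decreasing in $q_t$, which is exactly why the single-query allocation is optimal for ZO-Avg and mirrors the convex and strongly convex cases established above.
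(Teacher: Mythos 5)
Your proposal is correct and follows essentially the same route as the paper: the descent lemma with the second moment $\mathbb{E}[\|\hat{g}_{\text{AVG}}\|^2]=\frac{q_t+d+1}{q_t}\|\nabla f(x_t)\|^2$, the step size collapsing the per-iteration decrease to $\frac{q_t}{2L(q_t+d+1)}\|\nabla f(x_t)\|^2$, telescoping, and then maximizing $\sum_t \frac{q_t}{q_t+d+1}$ under the budget. Your allocation step, bounding $\frac{q_t}{q_t+d+1}\le\frac{q_t}{d+2}$ with equality iff $q_t=1$, is a slightly more direct and rigorous rendering of the paper's ``value per query'' argument, but it is the same underlying idea.
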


\begin{proof}
    See Appendix~\ref{Proof:thm_zo_avg_nonconvex}.
\end{proof}

\begin{theorem}
\label{thm:zo_align_nonconvex}
Let $f$ be an $L$-smooth function. The ZO-Align algorithm with step size $\eta_t=1/L$ ensures that:
\begin{equation}
\min_{t\in\{0,,1...,T-1\}}\mathbb{E}[\|\nabla f(x_t)\|^2] \le \frac{2Ld(f(x_0)-f^*)}{\sum_{t=0}^{T-1}q_t}
\end{equation}
Under a fixed total query budget $K$, any allocation $\{q_t\}$ that exhausts the budget, $\sum_{t=0}^{T-1} q_t = K$, yields the rate:
\begin{equation}
\min_{t\in\{0,,1...,T-1\}}\mathbb{E}[\|\nabla f(x_t)\|^2] \le \frac{2Ld(f(x_0)-f^*)}{K} 
\end{equation}
\end{theorem}

\begin{proof}
    See Appendix~\ref{Proof:thm_zo_align_nonconvex}.
\end{proof}

The conclusions for the non-convex setting mirror those from the convex case, reinforcing the conclusion of multi-query paradox. While the same fundamental difference in optimal allocation holds, the performance gap between the two methods is less pronounced here than in the strongly convex setting, boiling down to a constant factor difference rather than an exponential one.

\paragraph{Stochastic Case} We now extend our analysis to the stochastic setting, a standard framework in modern machine learning where the objective is an expectation: $f(x) = \E_{\xi}[F(x, \xi)]$. We show that the fundamental dichotomy between ZO-Avg and ZO-Align persists.

For this analysis, we adopt standard assumptions: $f(x)$ is convex, each component function $F(x, \xi)$ is L-smooth, and the stochastic gradients have bounded variance at the optimum, $x^* = \arg\min_x f(x)$.

\begin{assumption}
\label{asm:bounded_variance}
The variance of the stochastic gradient at the optimum is bounded, i.e., 
$$
\E_\xi[\|\nabla F(x^*, \xi) - \nabla f(x^*)\|^2] = \E_\xi[\|\nabla F(x^*, \xi)\|^2] \le \sigma^2.
$$
\end{assumption}

This leads to the following standard lemma which bounds the expected squared norm of the stochastic gradient.

\begin{lemma}
\label{lem:sto_lsmooth}
For a convex and L-smooth function $f(x)$ satisfying Assumption~\ref{asm:bounded_variance}, we have $\E_\xi[\|\nabla F(x, \xi)\|^2] \le 4L(f(x)-f(x^*)) + 2\sigma^2$.
\end{lemma}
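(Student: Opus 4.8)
The plan is to reduce the bound to the standard self-bounding (co-coercivity) property of smooth convex functions, applied at the level of the stochastic realization. First I would decompose the stochastic gradient around the optimum and apply Young's inequality $\|a+b\|^2 \le 2\|a\|^2 + 2\|b\|^2$ with $a = \nabla F(x,\xi) - \nabla F(x^*,\xi)$ and $b = \nabla F(x^*,\xi)$, giving
$$\E_\xi[\|\nabla F(x,\xi)\|^2] \le 2\,\E_\xi[\|\nabla F(x,\xi) - \nabla F(x^*,\xi)\|^2] + 2\,\E_\xi[\|\nabla F(x^*,\xi)\|^2].$$
The second term is immediately controlled by $2\sigma^2$ through Assumption~\ref{asm:bounded_variance}, so the crux is bounding the first term.

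For the first term I would invoke the co-coercivity inequality: for any $L$-smooth convex function $g$, one has $\|\nabla g(x) - \nabla g(y)\|^2 \le 2L\big(g(x) - g(y) - \langle \nabla g(y), x-y\rangle\big)$. Applying this with $g = F(\cdot,\xi)$ and $y = x^*$ yields
$$\|\nabla F(x,\xi) - \nabla F(x^*,\xi)\|^2 \le 2L\big(F(x,\xi) - F(x^*,\xi) - \langle \nabla F(x^*,\xi), x-x^*\rangle\big).$$
Taking expectation over $\xi$ and using linearity of expectation ($\E_\xi[F(\cdot,\xi)] = f(\cdot)$ and $\E_\xi[\nabla F(x^*,\xi)] = \nabla f(x^*)$) collapses the right-hand side to $2L\big(f(x) - f(x^*) - \langle \nabla f(x^*), x-x^*\rangle\big)$. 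Since $x^*$ minimizes $f$, the first-order optimality condition $\nabla f(x^*) = 0$ annihilates the linear term, leaving $2L(f(x)-f(x^*))$. Substituting back into the Young decomposition gives the claimed bound $4L(f(x)-f(x^*)) + 2\sigma^2$.

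The step I would flag most carefully is the application of co-coercivity, since that inequality requires each component $F(\cdot,\xi)$ to be convex, whereas the stated hypotheses assert convexity only for the expectation $f$ together with $L$-smoothness of each $F$. I would therefore either treat per-component convexity as part of the conventional assumption bundle for this line of stochastic zeroth-order analysis, or, if one insists on convexity of $f$ alone, replace the component-wise co-coercivity step by a bound relying solely on the $L$-smoothness of $F$ and the convexity of $f$, accepting a looser constant in exchange. Everything else — Young's inequality, the variance bound, and the optimality condition at $x^*$ — is routine.
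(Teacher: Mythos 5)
Your proof is correct (modulo the caveat you yourself flag) and it takes a genuinely different --- and in fact sounder --- route than the paper's. Both arguments open identically with Young's inequality $\|a+b\|^2 \le 2\|a\|^2 + 2\|b\|^2$ around $\nabla F(x^*,\xi)$ and dispatch the second term via Assumption~\ref{asm:bounded_variance}. The divergence is in the first term. You invoke co-coercivity of each component, $\|\nabla F(x,\xi)-\nabla F(x^*,\xi)\|^2 \le 2L\bigl(F(x,\xi)-F(x^*,\xi)-\langle \nabla F(x^*,\xi),x-x^*\rangle\bigr)$, take expectations, and kill the linear term with $\nabla f(x^*)=0$; this is the standard argument from the SGD literature, and your observation that it needs each $F(\cdot,\xi)$ to be convex (not just $f$) is exactly right --- that is the conventional assumption under which this lemma is stated, and the paper's hypotheses should be read (or amended) accordingly. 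The paper instead bounds $\|\nabla F(x,\xi)-\nabla F(x^*,\xi)\|^2 \le L^2\|x-x^*\|^2$ by $L$-smoothness and then asserts ``from the convexity of $f$'' that $\|x-x^*\|^2 \le \frac{2}{L}(f(x)-f(x^*))$. That step is not valid: for an $L$-smooth function the descent lemma gives the \emph{reverse} inequality $f(x)-f(x^*) \le \frac{L}{2}\|x-x^*\|^2$, and an upper bound on $\|x-x^*\|^2$ in terms of the function gap requires strong convexity (with modulus $L$, no less), which is not assumed; a convex function that is flat near $x^*$ is an immediate counterexample to the paper's intermediate claim. So your co-coercivity route is not merely an alternative --- it is the repair the paper's proof actually needs, at the cost of making per-component convexity explicit.
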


\begin{proof}
    See Appendix~\ref{Proof:lem_sto_lsmooth}.
\end{proof}

This lemma provides the key tool for controlling the error introduced by the stochastic oracle, which we now use to derive the convergence guarantee for two estimators.


\begin{theorem}
\label{thm:zo_avg_stochastic}
Let $f$ be an L-smooth and convex function satisfying Assumption 4.8. The ZO-Avg algorithm using a diminishing step size $\eta_t = \frac{\eta_0}{\sqrt{t+1}}$ with $\eta_0 \le \frac{1}{4L(d+2)}$ satisfies the stability condition $\eta_t \le \frac{q_t}{4L(q_t+d+1)}$ for all $t \ge 0$ . After $T$ iterations, it achieves the following convergence guarantee:
\begin{equation}
    \mathbb{E}[f(\overline{x}_T) - f(x^*)] \le \frac{||x_0 - x^*||^2 + 2\eta_0^2\sigma^2 \sum_{t=0}^{T-1} \frac{q_t+d+1}{q_t(t+1)}}{\eta_0 \sum_{t=0}^{T-1} \frac{1}{\sqrt{t+1}}}
\end{equation}
where $\overline{x}_T$ is the weighted average of the iterates. Under a fixed total query budget $K$, this convergence rate is optimized by choosing $q_t = 1$ for all $t$.
\end{theorem}

\begin{proof}
    See Appendix~\ref{Proof:thm_zo_avg_sto}.
\end{proof}

\begin{theorem}
\label{thm:zo_align_stochastic}
Let $f$ be an L-smooth and convex function satisfying Assumption~\ref{asm:bounded_variance}. The ZO-Align algorithm using a diminishing step size $\eta_t = \frac{\eta_0}{\sqrt{t+1}}$, with $\eta_t \le \frac{1}{4L}$, achieves the following convergence guarantee after $T$ iterations:
\begin{equation*}
\E[f(\bar{x}_T) - f(x^*)] \le \frac{d\|x_0-x^*\|^2 + 2\eta_0^2\sigma^2 \sum_{t=0}^{T-1} \frac{q_t}{t+1}}{\eta_0 \sum_{t=0}^{T-1} \frac{q_t}{\sqrt{t+1}}}
\end{equation*}
where $\bar{x}_T$ is the weighted average of the iterates. Under a fixed total query budget $K$, this convergence rate is optimized by choosing $q_t=d$ for $T=K/d$ iterations.
\end{theorem}

\begin{proof}
    See Appendix~\ref{Proof:thm_zo_align_sto}.
\end{proof}

\subsection{The Multi-Query Paradox}
\label{sec:multi_query_paradox}

Taken together, the theoretical results reveal a simple dichotomy under a fixed total query budget. For ZO-Avg, all four regimes we study—strongly convex, convex, non-convex, and stochastic convex—lead to the same conclusion: the optimal allocation is to take a single query per iteration, so that the budget is spent on as many steps as possible. For ZO-Align, the picture is the opposite in the strongly convex and stochastic convex cases, where the best strategy is to use as many queries as possible per step (up to $d$ when feasible), while in the deterministic convex and non-convex cases the convergence rate depends only on the total number of queries and is essentially indifferent to how they are distributed across iterations.

This dichotomy has a clear geometric interpretation. ZO-Avg forms its update by averaging directional derivatives along a block of sampled directions, but it never accounts for the geometry of these directions. If several query vectors happen to be close to each other, their information is effectively counted multiple times, while directions that would explore new parts of the space may be underrated. Additional queries therefore provide diminishing returns: they mainly reduce variance through repeated measurements along nearly redundant directions, and this variance reduction scales sublinearly in the block size. Once the linear cost in queries is taken into account, it is always better for ZO-Avg to take more, cheaper steps with a single direction per step than to aggregate many directions at once.

By contrast, ZO-Align uses the entire block of directional measurements to reconstruct the orthogonal projection of the gradient onto the span of the sampled directions. The Gram matrix is explicitly inverted, so correlations and unequal norms among the queries are compensated, and the resulting update is the unique vector in the sampled subspace whose projections match the observed directional derivatives. As a consequence, each additional query genuinely enlarges and refines the subspace in which the method can make informed progress, rather than merely re-averaging noisy measurements along similar directions. This difference in how query information is used explains why ZO-Align benefits from larger blocks in the strongly convex case, where every improvement in the quality of the update translates into a better contraction factor and hence compounds multiplicatively over iterations. In the limit where the block size equals the dimension and the query matrix has full rank, ZO-Align recovers the true gradient and reduces to a classical full-subspace finite-difference method.


\begin{figure*}[!htb]
    \centering
    \begin{minipage}{0.49\textwidth}
        \includegraphics[width=\linewidth]{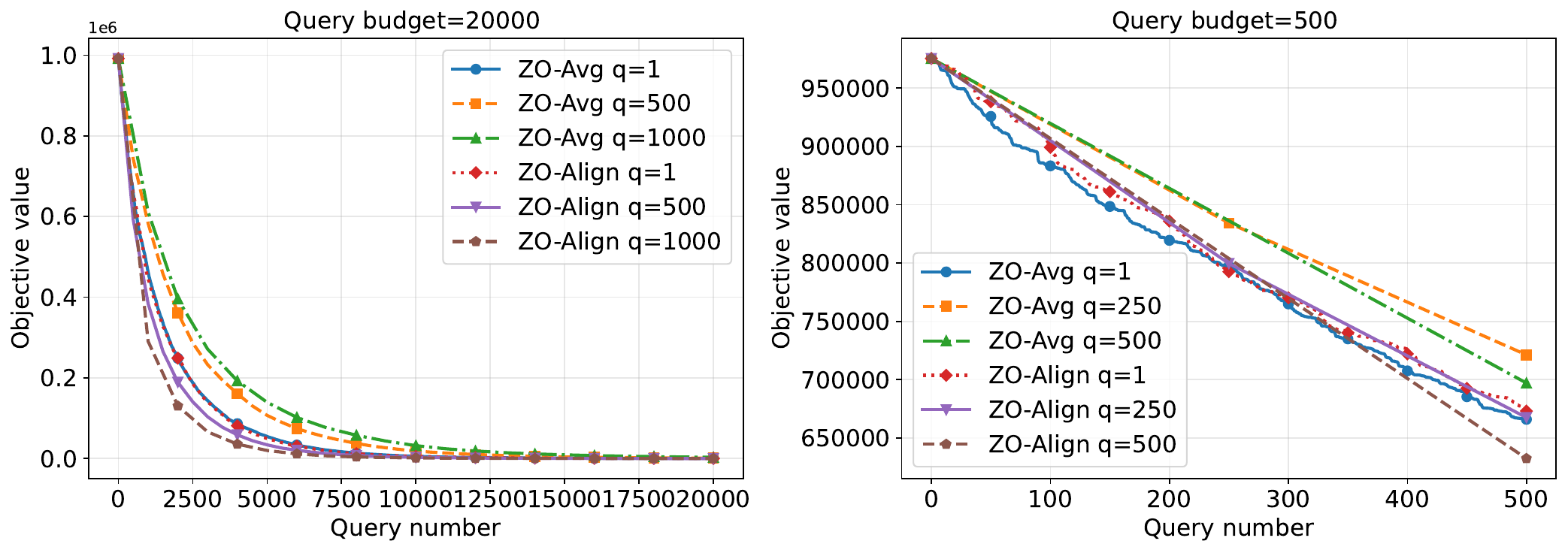}
        \caption{Objective with query used: Strongly convex case.}
        \label{fig:strongly_convex}
    \end{minipage}
    \hfill
    \begin{minipage}{0.49\textwidth}
        \includegraphics[width=\linewidth]{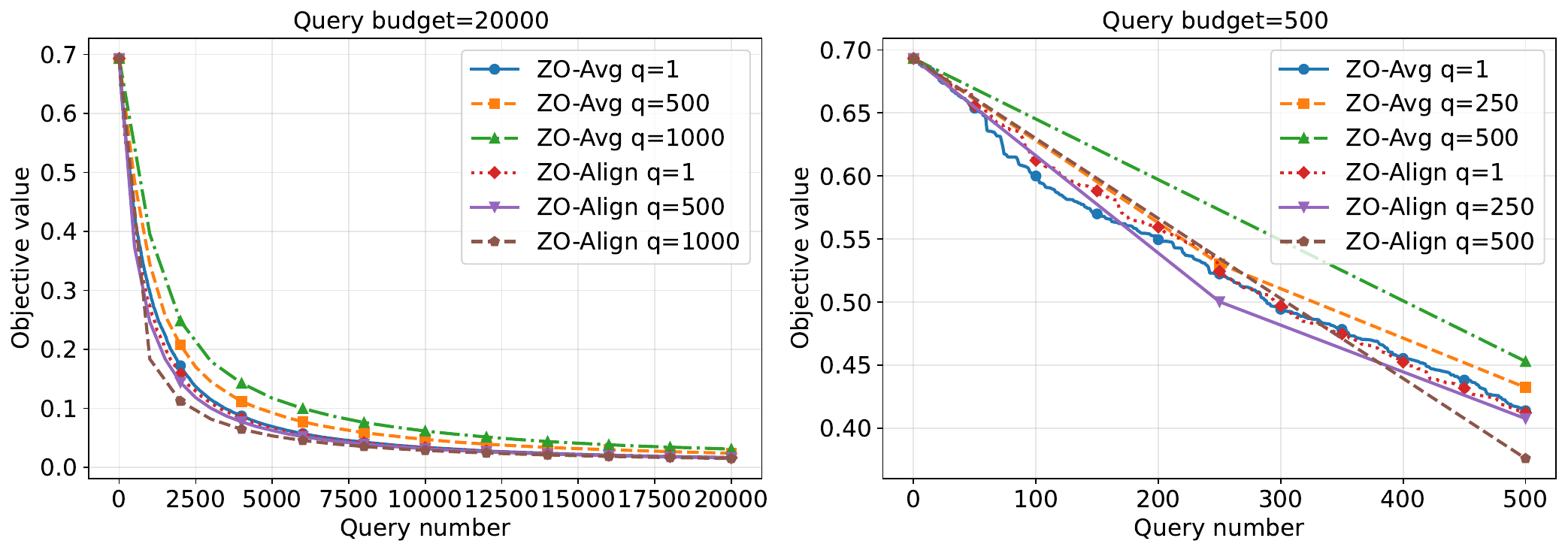}
        \caption{Objective with query used: Convex case.}
        \label{fig:convex}
    \end{minipage}
    \vspace{-3mm}
\end{figure*}

\begin{figure*}[!htb]
    \centering
    \begin{minipage}{0.49\textwidth}
        \includegraphics[width=\linewidth]{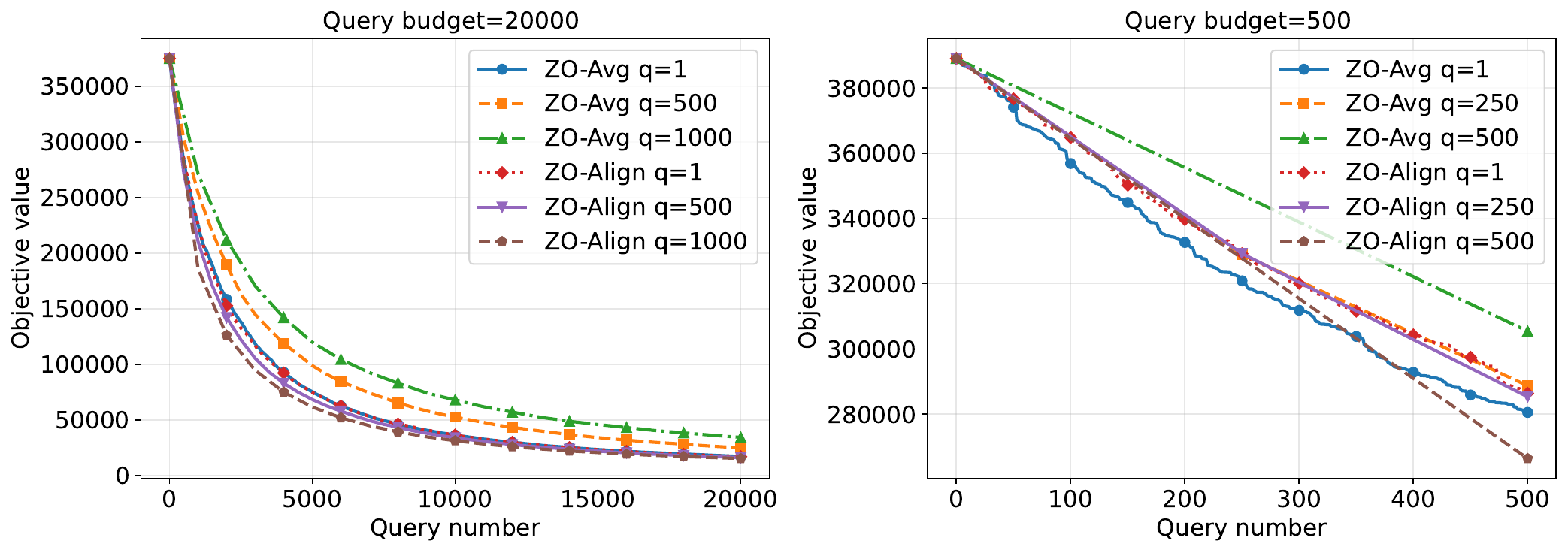}
        \caption{Objective with query used: Nonconvex case.}
        \label{fig:nonconvex}
    \end{minipage}
    \hfill
    \begin{minipage}{0.49\textwidth}
        \includegraphics[width=\linewidth]{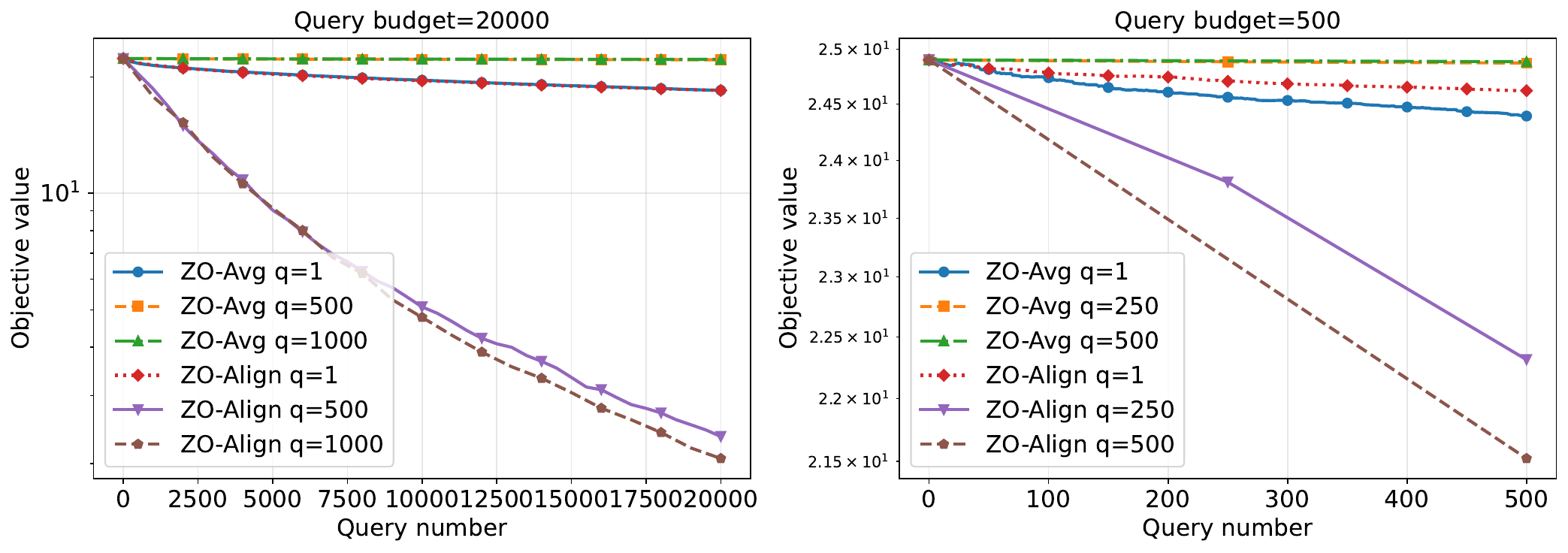}
        \caption{Objective with query used: Stochastic case.}
        \label{fig:sto_case}
    \end{minipage}
    \vspace{-5mm}
\end{figure*}
\section{High-Dimensional Regime}
\label{sec:high_dim_approx}

Modern applications such as large-scale neural network fine-tuning often operate in a regime where the ambient dimension $d$ is extremely large, whereas the query batch size $q$ is constrained to be much smaller than $d$. In this setting, when the columns of $U \in \mathbb{R}^{d\times q}$ are i.i.d.\ samples from $\mathcal{N}(0,I)$, the Gram matrix $U^\top U$ concentrates sharply around a scaled identity:
\begin{equation*}
    U^\top U \;\approx\; d\, I_q \qquad \text{when } d \gg q,
\end{equation*}
so that the sampled directions are nearly mutually orthogonal and $(U^\top U)^{-1}$ is, to leading order, a scalar multiple of the identity. A rigorous analysis and empirical verification of the approximation are provided in Appendix~\ref{app:high_dim}.

This observation clarifies how the benefits of ZO-Align evolve in high dimensions. The main advantage of ZO-Align over simple averaging is that it explicitly accounts for correlations among the sampled directions through the factor $(U^\top U)^{-1}$. When $d$ is moderate and $q$ is a nontrivial fraction of $d$, the Gram matrix $U^\top U$ can deviate substantially from a scaled identity, and this reweighting meaningfully corrects for the non-orthogonality and unequal norms of the sampled directions. In contrast, when $d \gg q$ and $U^\top U \approx d I_q$, the sampled directions are already almost orthogonal and have comparable lengths, so the correction degenerates to an almost uniform scaling. As a result, the geometric distinction between ZO-Align and ZO-Avg within the sampled subspace becomes much less pronounced in this extreme high-dimensional regime. 

\section{Experiments}
\label{sec:exp}

To complement our theoretical analysis, we evaluate ZO-Avg and ZO-Align under fixed query budgets on both classical objectives and a LLM fine-tuning task. Throughout, we plot performance against the \emph{total number of queries}, so that the horizontal axis directly reflects zeroth-order computational cost.

\subsection{Classical Optimization Problems}

We first consider four standard objectives at dimension $d = 1000$, representing the regimes studied in Section~\ref{sec:query_allocation}: a strongly convex quadratic, convex logistic regression on synthetic data, the non-convex Rosenbrock function, and a stochastic convex logistic regression objective (details in Appendix~\ref{sec:appendix_exp_setup}). For each problem we run ZO-Avg and ZO-Align with several query sizes $q$ per iteration under two total budgets, $K = 20{,}000$ and $K = 500$ (the latter precluding a single full-subspace step). Each configuration is averaged over multiple random seeds.

Across all four problems, the empirical behaviour closely matches our theoretical allocation results. For ZO-Avg, the best performance under a fixed budget is consistently obtained with $q=1$, and increasing $q$ monotonically harms query efficiency, in line with our Theorems. For ZO-Align, larger query blocks yield clear gains in the strongly convex and stochastic convex cases, where the convergence factors depend linearly on $q$, while in the deterministic convex and non-convex problems the curves for different $q$ are much closer, reflecting the allocation-insensitive rates. Overall, these experiments confirm the query allocation dichotomy of Section~\ref{sec:multi_query_paradox}: ZO-Avg prefers single-query steps, whereas ZO-Align can exploit larger blocks without the collapse in efficiency seen for ZO-Avg.

\begin{figure}
    \centering
    \includegraphics[width=\linewidth]{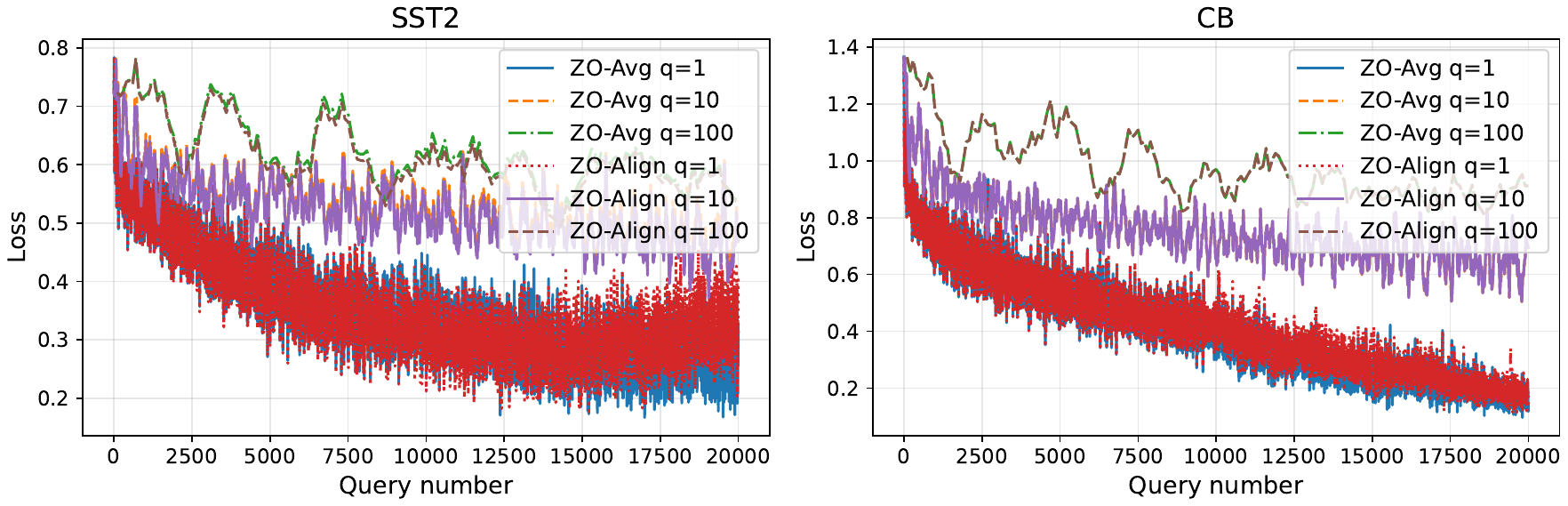}
    \caption{Finetuning Qwen3-0.6b on SST2 and CB. }
    \label{fig:llm_ft}
    \vspace{-5mm}
\end{figure}

\subsection{High-Dimensional Language Model Fine-Tuning}
\label{sec:exp_high_dim}

We next examine a high-dimensional setting where the parameter dimension $d$ is extremely large while the feasible query batch size $q$ remains modest ($q \ll d$). We fine-tune the Qwen3-0.6B model on two sentence classification benchmarks, SST-2 and CB, using zeroth-order optimization on the cross-entropy loss. The parameters being updated are treated as a single vector in $\mathbb{R}^d$; further protocol details are given in Appendix~\ref{sec:appendix_exp_setup}.

We compare ZO-Avg with an approximate variant of ZO-Align adapted to the high-dimensional regime. For ZO-Align, forming and inverting the full Gram matrix $U^\top U$ is prohibitively expensive at this scale. Motivated by the concentration result in Section~\ref{sec:high_dim_approx}, we approximate the Gram matrix by its diagonal. We fix a total query budget of $K = 20{,}000$ function evaluations and vary the per-iteration query count over
$q \in \{1,10,100\}$. The step size is tuned separately for each (estimator, $q$) pair by grid search.

As shown on Figure~\ref{fig:llm_ft}, on both SST-2 and CB, ZO-Avg continues to favour small query blocks: the best optimization performance per query is achieved for $q=1$, and increasing $q$ consistently slows progress under the fixed budget. The diagonal ZO-Align variant closely tracks ZO-Avg across all choices of $q$, with the two sets of curves almost indistinguishable at the scale of our plots. This behaviour matches the high-dimensional analysis in Section~\ref{sec:high_dim_approx}: when $d \gg q$, the Gram matrix concentrates near a scaled identity, so the projection step in ZO-Align reduces essentially to a rescaled version of the ZO-Avg update. In realistic large-scale language-model fine-tuning, the query allocation dichotomy for ZO-Avg therefore remains clearly visible, while the additional geometric structure of ZO-Align provides little practical gain beyond what can be achieved by a suitably tuned ZO-Avg scheme.


\section{Conclusions}
\label{sec:conclusions}
This work addresses two core questions in zeroth-order optimization: how to aggregate information from multiple queries and how to allocate a fixed query budget. By analyzing simple averaging ZO-Avg and the projection-based ZO-Align estimator across strongly convex, convex, non-convex, and stochastic settings, we show that aggregation and allocation are tightly coupled and lead to a sharp dichotomy. Under a fixed budget, the optimal strategy for ZO-Avg is always to use single-query steps, whereas ZO-Align can benefit from larger query blocks. Experiments validate the theoretical analysis. Our results resolve the multi-query paradox by clarifying that the central design choice is not to tune an intermediate query size, but to select an aggregation rule that determines whether the canonical algorithm is a sequential single-query scheme or a projection-based multi-query method.


\section*{Impact Statement}
This paper presents work whose goal is to advance the field of Machine Learning. There are many potential societal consequences of our work, none which we feel must be specifically highlighted here.




\bibliography{icml2026}
\bibliographystyle{icml2026}


\newpage
\appendix
\onecolumn

\clearpage

\onecolumn
\appendix

\begin{center}
    {\Large \bfseries Appendix}   
\end{center}

\section{High-Dimensional Behaviour of ZO-Align}
\label{app:high_dim}

In this appendix we analyze the behaviour of ZO-Align in the regime where the dimension $d$ is very large while the query batch size $q$ remains much smaller than $d$. Throughout, we retain the notation from the main text: $U \in \mathbb{R}^{d\times q}$ collects the $q$ query directions as columns, each sampled i.i.d.\ from $\mathcal{N}(0,I_d)$, and
\[
P_U \;=\; U (U^\top U)^{-1} U^\top
\]
denotes the orthogonal projection onto the column space of $U$. In the idealized limit $\mu\to 0$, ZO-Align can be written as
\[
\hat{g}_{ALG}(x) \;=\; P_U \nabla f(x),
\]
so understanding the high-dimensional behaviour of ZO-Align reduces to understanding the random projector $P_U$.

\subsection{Concentration of the Gram Matrix}
\label{app:high_dim_gram}

It is convenient to view $U$ row-wise. Let $r_1,\dots,r_d \in \mathbb{R}^q$ denote the rows of $U$. Since the columns of $U$ are i.i.d.\ $\mathcal{N}(0,I_d)$, each row $r_k$ is an independent sample from $\mathcal{N}(0,I_q)$ and
\[
U^\top U
= \sum_{k=1}^d r_k^\top r_k, 
\qquad
\frac{1}{d} U^\top U
= \frac{1}{d} \sum_{k=1}^d r_k^\top r_k
\]
is the empirical covariance matrix of $d$ i.i.d.\ Gaussian samples in $\mathbb{R}^q$.

The following standard result quantifies the concentration of this empirical covariance around the identity.

\begin{lemma}
\label{lem:gram_concentration}
There exist absolute constants $c,C>0$ such that the following holds. Let $U \in \mathbb{R}^{d\times q}$ have i.i.d.\ $\mathcal{N}(0,1)$ entries. Then for all $t \in (0,1)$,
\[
\Pr\!\left(
  \left\| \frac{1}{d} U^\top U - I_q \right\|_{op}
  \;\ge\; C\Big( \sqrt{\frac{q}{d}} + t \Big)
\right)
\;\le\; 2 \exp(-c d t^2).
\]
In particular, if $d \gg q$ then with probability at least $1 - 2 \exp(-c' q)$,
\[
\left\| \frac{1}{d} U^\top U - I_q \right\|_{op}
= O\!\left(\sqrt{\frac{q}{d}}\right).
\]
\end{lemma}

\begin{proof}
This is a standard consequence of sample covariance concentration for isotropic sub-Gaussian vectors; see, e.g., the general treatments in Chapter 5 of \cite{vershynin2018high}. For completeness, we specialize the argument to the Gaussian case.

Row representation and reduction to one-dimensional marginals. Write $U$ row-wise as
\[
U = \begin{bmatrix} r_1^\top \\ \vdots \\ r_d^\top \end{bmatrix},
\qquad
r_k \in \mathbb{R}^q,\quad r_k \sim \mathcal{N}(0,I_q)\ \text{i.i.d.}
\]
Then
\[
\frac{1}{d} U^\top U
= \frac{1}{d} \sum_{k=1}^d r_k r_k^\top
\]
is the empirical covariance matrix of $d$ i.i.d.\ Gaussian samples in $\mathbb{R}^q$.

Let
\[
\Sigma_d := \frac{1}{d} U^\top U,
\qquad
A := \Sigma_d - I_q.
\]
We need to control $\|A\|_{op}$. For any $x \in \mathbb{R}^q$ with $\|x\|=1$, we have
\[
x^\top A x
= x^\top \Sigma_d x - x^\top x
= \frac{1}{d} \sum_{k=1}^d \langle r_k, x \rangle^2 - 1.
\]
Define
\[
Z_k(x) := \langle r_k, x \rangle,
\qquad
Y_k(x) := Z_k(x)^2 - 1.
\]
For fixed $x$ with $\|x\|=1$, $Z_k(x) \sim \mathcal{N}(0,1)$ i.i.d., so
\[
\sum_{k=1}^d Z_k(x)^2 \sim \chi^2_d.
\]
Consequently, $\frac{1}{d} \sum_{k=1}^d Z_k(x)^2$ concentrates around $1$, and we can use a sharp tail inequality for the chi-square distribution due to Laurent and Massart~\cite{laurent2000adaptive}.

Let $S(x) := \sum_{k=1}^d Z_k(x)^2$. By definition,
\[
x^\top A x
= \frac{1}{d} S(x) - 1.
\]
Eq. 4.3-4.4 in \cite{laurent2000adaptive} show that if $S \sim \chi^2_d$, then for all $u>0$,
\begin{align*}
\Pr\!\big(S - d \ge 2\sqrt{d u} + 2u \big) &\le e^{-u}, \\
\Pr\!\big(d - S \ge 2\sqrt{d u} \big) &\le e^{-u}.
\end{align*}
Translating these inequalities to $\frac{1}{d} S(x) - 1$ yields: for all $u>0$,
\begin{align*}
\Pr\!\left( \frac{1}{d} S(x) - 1 \ge 2 \sqrt{\frac{u}{d}} + 2 \frac{u}{d} \right)
&\le e^{-u}, \\
\Pr\!\left( 1 - \frac{1}{d} S(x) \ge 2 \sqrt{\frac{u}{d}} \right)
&\le e^{-u}.
\end{align*}
Taking $u \in (0,d)$, we have $\frac{u}{d} \le 1$, so there exists an absolute constant $C_0>2$ such that
\[
2 \sqrt{\frac{u}{d}} + 2 \frac{u}{d}
\le C_0 \left( \sqrt{\frac{u}{d}} \right).
\]
Hence, for all $u \in (0,d)$,
\[
\Pr\!\left( \left| \frac{1}{d} S(x) - 1 \right|
          \ge C_0 \sqrt{\frac{u}{d}} \right)
\le 2 e^{-u}.
\]
Equivalently, for all $u \in (0,d)$,
\begin{equation}
\label{eq:fixed-x-tail}
\Pr\!\left( \left| x^\top A x \right|
          \ge C_0 \sqrt{\frac{u}{d}} \right)
\le 2 e^{-u}.
\end{equation}

Now restrict to thresholds $u = c_1 d v^2$ with $v \in (0,1)$ and $c_1>0$ small enough so that $u<d$. Then
\[
\sqrt{\frac{u}{d}} = \sqrt{c_1}\, v,
\]
and~\eqref{eq:fixed-x-tail} becomes
\[
\Pr\!\left( \left| x^\top A x \right|
          \ge C_1 v \right)
\le 2 \exp(-c_1 d v^2),
\]
for constants $C_1 = C_0 \sqrt{c_1}$ and $c_1>0$. Renaming $v$ to $u$ and $c_1$ to $c_2$, we obtain:

\begin{equation}
\label{eq:fixed-direction-subgaussian}
\Pr\!\left( \left| x^\top A x \right|
          \ge u \right)
\le 2 \exp(-c_2 d u^2),
\qquad \forall\, u \in (0,1),\ \forall\, x \in \mathbb{S}^{q-1},
\end{equation}
for some absolute constant $c_2>0$ and where $\mathbb{S}^{q-1} = \{x \in \mathbb{R}^q : \|x\|=1\}$.

Let $\mathbb{S}^{q-1}$ denote the unit sphere in $\mathbb{R}^q$. For $\varepsilon \in (0,1)$, an $\varepsilon$-net $\mathcal{N}_\varepsilon \subset \mathbb{S}^{q-1}$ is a finite set such that for any $x \in \mathbb{S}^{q-1}$ there exists $y \in \mathcal{N}_\varepsilon$ with $\|x-y\| \le \varepsilon$. It is well known that there exists an $\varepsilon$-net with cardinality bounded by
\[
|\mathcal{N}_\varepsilon|
\le \left(1 + \frac{2}{\varepsilon}\right)^q.
\]
 We fix $\varepsilon = \frac{1}{4}$ and let $\mathcal{N}$ be a $\frac{1}{4}$-net; then
\[
|\mathcal{N}| \le 9^q.
\]

We also use a standard net approximation for the operator norm of a symmetric matrix.

\begin{claim}
\label{clm:net-operator-norm}
Let $A$ be a symmetric $q\times q$ matrix and let $\mathcal{N}$ be a $\frac{1}{4}$-net of $\mathbb{S}^{q-1}$. Then
\[
\|A\|_{op}
= \sup_{x\in\mathbb{S}^{q-1}} |x^\top A x|
\le 2 \sup_{y\in\mathcal{N}} |y^\top A y|.
\]
\end{claim}

\begin{proof}[Proof of Claim~\ref{clm:net-operator-norm}]
Let $x^\star \in \mathbb{S}^{q-1}$ satisfy $|{x^\star}^\top A x^\star| = \|A\|_{op}$. Choose $y \in \mathcal{N}$ with $\|x^\star - y\| \le \tfrac{1}{4}$. Then
\[
x^\star = y + (x^\star - y),
\qquad \|x^\star - y\| \le \tfrac{1}{4}.
\]
Using symmetry of $A$,
\[
x^{\star\top} A x^\star
= y^\top A y + 2 (x^\star - y)^\top A y + (x^\star - y)^\top A (x^\star - y).
\]
Taking absolute values and using $\|y\|,\|x^\star-y\|\le 1$,
\begin{align*}
\|A\|_{op}
&\le |y^\top A y|
  + 2 \|x^\star - y\| \|A\|_{op} \|y\|
  + \|x^\star - y\|^2 \|A\|_{op} \\
&\le |y^\top A y| + \left(2\cdot \tfrac{1}{4}\cdot 1 + \tfrac{1}{4}^2\right)\|A\|_{op}
= |y^\top A y| + \tfrac{9}{16}\|A\|_{op}.
\end{align*}
Rearranging gives
\[
\frac{7}{16}\|A\|_{op} \le |y^\top A y|,
\]
hence
\[
\|A\|_{op} \le \frac{16}{7} \sup_{y\in\mathcal{N}} |y^\top A y| \le 2 \sup_{y\in\mathcal{N}} |y^\top A y|.
\]
\end{proof}

Applying Claim~\ref{clm:net-operator-norm} to $A = \Sigma_d - I_q$ yields
\[
\left\| \frac{1}{d} U^\top U - I_q \right\|_{op}
= \|A\|_{op}
\le 2 \max_{y\in\mathcal{N}} |y^\top A y|.
\]

Fix $\delta \in (0,1)$, to be chosen later. Using the net bound and~\eqref{eq:fixed-direction-subgaussian}, we have
\begin{align*}
\Pr\!\left( \|A\|_{op} \ge \delta \right)
&\le \Pr\!\left( \exists\, y \in \mathcal{N} \text{ such that } |y^\top A y| \ge \frac{\delta}{2} \right) \\
&\le \sum_{y\in\mathcal{N}} \Pr\!\left( |y^\top A y| \ge \frac{\delta}{2} \right) \\
&\le |\mathcal{N}| \cdot 2 \exp\!\left(-c_2 d \frac{\delta^2}{4}\right) \\
&\le 2 \exp\!\left( q \log 9 - c_3 d \delta^2 \right),
\end{align*}
for some absolute constant $c_3>0$.

Now take
\[
\delta
= C\Big(\sqrt{\frac{q}{d}} + t\Big),
\qquad t\in(0,1),
\]
for a sufficiently large absolute constant $C>0$ (to be fixed). Then
\[
\delta^2
= C^2\left(\frac{q}{d} + t^2 + 2 t \sqrt{\frac{q}{d}}\right)
\ge \frac{C^2}{2}\left(\frac{q}{d} + t^2\right),
\]
and hence
\begin{align*}
q \log 9 - c_3 d \delta^2
&\le q \log 9 - c_3 d \cdot \frac{C^2}{2}\left(\frac{q}{d} + t^2\right) \\
&= q\left(\log 9 - \frac{c_3 C^2}{2}\right)
   - \frac{c_3 C^2}{2} d t^2.
\end{align*}
Choose $C$ large enough so that $\log 9 - \frac{c_3 C^2}{2} \le -1$. Then
\[
q\left(\log 9 - \frac{c_3 C^2}{2}\right)
\le - q \le 0,
\]
and hence
\[
q \log 9 - c_3 d \delta^2
\le - c_4 d t^2
\]
for some absolute constant $c_4>0$. Plugging this into the previous bound gives
\[
\Pr\!\left(
  \left\| \frac{1}{d} U^\top U - I_q \right\|_{op}
  \ge C\Big(\sqrt{\frac{q}{d}} + t\Big)
\right)
\le 2 \exp(-c_4 d t^2),
\]
which proves the first statement (with $c := c_4$).

\textbf{The in particular statement.}
For the second claim, take $t = \sqrt{q/d}$ (note that $t \in (0,1)$ as soon as $d$ is sufficiently large compared to $q$). Then
\[
C\Big(\sqrt{\frac{q}{d}} + t\Big)
= 2 C \sqrt{\frac{q}{d}},
\]
and
\[
\Pr\!\left(
  \left\| \frac{1}{d} U^\top U - I_q \right\|_{op}
  \ge 2 C \sqrt{\frac{q}{d}}
\right)
\le 2 \exp\!\left(-c_4 d \frac{q}{d}\right)
= 2 \exp(-c_4 q).
\]
Thus, with probability at least $1 - 2\exp(-c_4 q)$ we have
\[
\left\| \frac{1}{d} U^\top U - I_q \right\|_{op}
\le 2 C \sqrt{\frac{q}{d}},
\]
which is the claimed $O(\sqrt{q/d})$ bound (with $c' := c_4$).
\end{proof}

Thus, in the regime $d \gg q$ the Gram matrix satisfies
\[
U^\top U
= d\,(I_q + \Delta_d),
\qquad
\|\Delta_d\|_{op}
= O\!\left(\sqrt{\frac{q}{d}}\right)
\]
with overwhelmingly high probability. Geometrically, the columns of $U$ are almost mutually orthogonal and have almost equal norms.

\subsection{Approximate Structure of the Projection Operator}
\label{app:high_dim_projection}

We now use Lemma~\ref{lem:gram_concentration} to approximate the projection operator $P_U$ in the high-dimensional regime. Recall that
\[
P_U
= U(U^\top U)^{-1} U^\top
= U \big(d(I_q+\Delta_d)\big)^{-1} U^\top
= \frac{1}{d} U (I_q+\Delta_d)^{-1} U^\top.
\]

\begin{lemma}[Approximation of $(U^\top U)^{-1}$]
\label{lem:inverse_expansion}
Suppose $\|\Delta_d\|_{op} \le \frac{1}{2}$. Then
\[
(I_q + \Delta_d)^{-1}
= I_q + E_d,
\qquad
\|E_d\|_{op}
\le 2 \|\Delta_d\|_{op},
\]
and hence
\[
(U^\top U)^{-1}
= \frac{1}{d} (I_q + E_d),
\qquad
\|(U^\top U)^{-1} - \tfrac{1}{d} I_q\|_{op}
\le \frac{2}{d} \|\Delta_d\|_{op}.
\]
\end{lemma}

\begin{proof}
If $\|\Delta_d\|_{op} \le \frac{1}{2}$ then the Neumann series
\[
(I_q+\Delta_d)^{-1} = \sum_{k=0}^{\infty} (-\Delta_d)^k
\]
converges in operator norm, and
\[
(I_q+\Delta_d)^{-1} - I_q
= \sum_{k=1}^{\infty} (-\Delta_d)^k.
\]
Using the triangle inequality and submultiplicativity yields
\[
\|E_d\|_{op}
= \left\| \sum_{k=1}^{\infty} (-\Delta_d)^k \right\|_{op}
\le \sum_{k=1}^{\infty} \|\Delta_d\|_{op}^k
= \frac{\|\Delta_d\|_{op}}{1-\|\Delta_d\|_{op}}
\le 2 \|\Delta_d\|_{op},
\]
where the last inequality uses $\|\Delta_d\|_{op} \le \frac{1}{2}$. The bound for $(U^\top U)^{-1}$ follows by multiplication by $1/d$.
\end{proof}

Combining Lemmas~\ref{lem:gram_concentration} and~\ref{lem:inverse_expansion}, we obtain that for $d$ sufficiently large and $q \ll d$, with high probability,
\[
(U^\top U)^{-1}
= \frac{1}{d} I_q + O\!\left(\frac{1}{d}\sqrt{\frac{q}{d}}\right)
\quad\text{in operator norm}.
\]

We next propagate this approximation through to the projector $P_U$.

\begin{proposition}[Approximation of $P_U$ by $UU^\top / d$]
\label{prop:proj_approx}
Let $U \in \mathbb{R}^{d\times q}$ have i.i.d.\ $\mathcal{N}(0,1)$ entries. There exist absolute constants $c,C>0$ such that the following holds. With probability at least $1 - 2\exp(-c q)$,
\[
\left\| P_U - \frac{1}{d} U U^\top \right\|_{op}
\;\le\; C \sqrt{\frac{q}{d}}.
\]
Consequently, for any $g \in \mathbb{R}^d$,
\[
\left\| P_U g - \frac{1}{d} U U^\top g \right\|
\;\le\; C \sqrt{\frac{q}{d}} \,\|g\|.
\]
\end{proposition}

\begin{proof}
Write
\[
P_U - \frac{1}{d} U U^\top
= U\Big((U^\top U)^{-1} - \tfrac{1}{d} I_q\Big) U^\top
= \frac{1}{d} U E_d U^\top,
\]
where $E_d$ is as in Lemma~\ref{lem:inverse_expansion}. Thus
\[
\left\| P_U - \frac{1}{d} U U^\top \right\|_{op}
\le \frac{1}{d}\|U\|_{op}^2 \,\|E_d\|_{op}.
\]

For a Gaussian matrix $U \in \mathbb{R}^{d\times q}$, standard random matrix bounds give
\[
\|U\|_{op}
\le \sqrt{d} + \sqrt{q} + t
\]
with probability at least $1 - 2\exp(-c' t^2)$ for some absolute constant $c'>0$. Taking $t$ of order $\sqrt{q}$ and using $q \le d$ yields $\|U\|_{op}^2 = O(d)$ with probability at least $1 - 2\exp(-c'' q)$ for some $c''>0$.

On the same high-probability event, Lemma~\ref{lem:gram_concentration} ensures that $\|\Delta_d\|_{op} = O(\sqrt{q/d})$ and in particular $\|\Delta_d\|_{op} \le \tfrac{1}{2}$ for $d$ sufficiently large. Lemma~\ref{lem:inverse_expansion} then gives
\[
\|E_d\|_{op}
\le 2 \|\Delta_d\|_{op}
= O\!\left(\sqrt{\frac{q}{d}}\right).
\]
Putting these bounds together,
\[
\left\| P_U - \frac{1}{d} U U^\top \right\|_{op}
\le \frac{1}{d} \cdot C_1 d \cdot C_2 \sqrt{\frac{q}{d}}
= C \sqrt{\frac{q}{d}}
\]
for suitable absolute constants $C_1,C_2,C>0$. The bound for $P_U g - (1/d) U U^\top g$ follows immediately by multiplying by $\|g\|$.
\end{proof}

Proposition~\ref{prop:proj_approx} formalizes the statement that, when $d \gg q$, the action of $P_U$ on any vector $g$ is very close (in norm) to the action of the simpler operator $(1/d) U U^\top$. In other words, within the sampled subspace, the update direction produced by ZO-Align is, up to a small relative error of order $\sqrt{q/d}$, the same as ZO-Avg.

\subsection{Implications for ZO-Align in High Dimensions}
\label{app:high_dim_implications}

The main advantage of ZO-Align over ZO-AVG is that it explicitly accounts for correlations among the sampled directions through the factor $(U^\top U)^{-1}$, and thus adjusts the relative contributions of different directions in $U$ according to the spectrum of $U^\top U$. When $d$ is moderate and $q$ is a nontrivial fraction of $d$, the Gram matrix $U^\top U$ can deviate substantially from a scaled identity, and these spectral corrections can significantly improve how the sampled subspace represents the local geometry of $f$.

In the high-dimensional regime $d \gg q$, however, Lemma~\ref{lem:gram_concentration} and Proposition~\ref{prop:proj_approx} show that, with high probability,
\[
U^\top U \approx d I_q,
\qquad
P_U \approx \frac{1}{d} U U^\top,
\]
with approximation error of order $\sqrt{q/d}$ in operator norm. In this case the sampled directions are already almost mutually orthogonal with nearly equal norms, and the spectral corrections encoded in $(U^\top U)^{-1}$ reduce, to leading order, to a global scaling by $1/d$. As a consequence, the additional structural advantage of ZO-Align over naive aggregation inside the sampled subspace becomes much less pronounced when $d$ is extremely large and $q \ll d$.

It is important to emphasize that this high-dimensional geometric simplification does not alter the exact MSE formulas in Proposition~\ref{prop:mse}, which are valid for every finite $d$. The high-dimensional analysis merely explains why the \emph{geometric} distinction inside the sampled subspace becomes weaker as $d/q \to \infty$.

\subsection{Numerical Illustration of \texorpdfstring{$U^\top U \approx d I_q$}{UTU ≈ dIq}}
\label{app:high_dim_figure}

To empirically illustrate the concentration of the Gram matrix, we consider Gaussian query matrices $U \in \mathbb{R}^{d\times q}$ with a fixed $q=10$ and varying dimension $d \in \{10, 100, 1000, 10000\}$. For each value of $d$, we sample $U$ with i.i.d.\ $\mathcal{N}(0,1)$ entries and compute the normalized Gram matrix
\[
G_d \;=\; \frac{1}{d} U^\top U \in \mathbb{R}^{q\times q}.
\]
We then visualize the absolute values $|G_d|$ as heatmaps. When $d$ is small, $G_d$ exhibits noticeable deviations from the identity, with substantial off-diagonal entries. As $d$ increases, the diagonal entries of $G_d$ stabilize near $1$, while the off-diagonal entries shrink toward $0$, and the heatmaps become visually indistinguishable from those of an identity matrix.

A typical realization is shown in Figure~\ref{fig:gram_heatmaps}. The progressive disappearance of off-diagonal structure as $d$ grows is consistent with Lemma~\ref{lem:gram_concentration} and supports the approximation $U^\top U \approx d I_q$ used in the high-dimensional discussion above.

\begin{figure}[t]
    \centering
    \includegraphics[width=\textwidth]{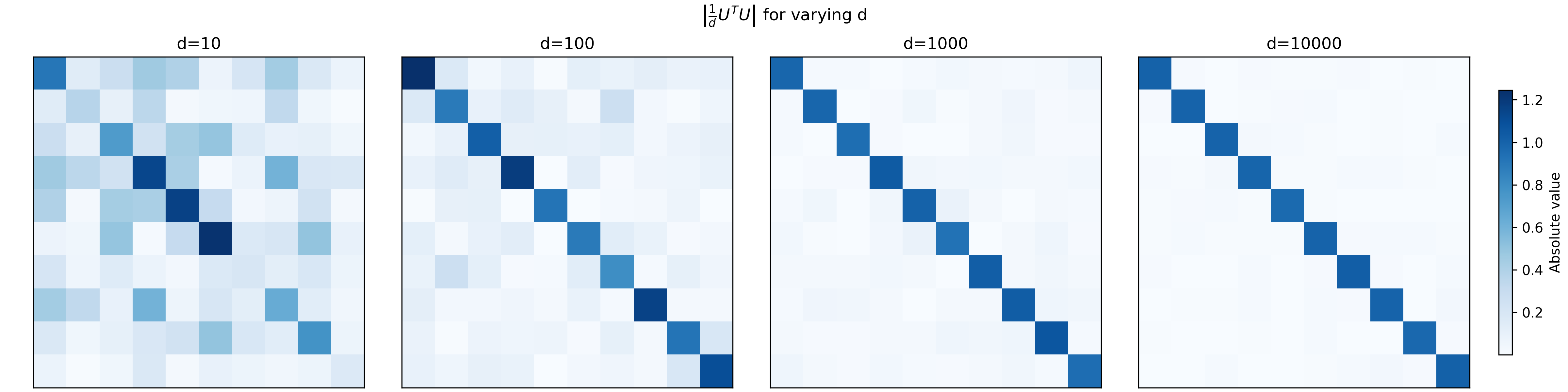}
    \caption{Heatmaps of the absolute value of the normalized Gram matrix $G_d = (1/d) U^\top U$ for $q=10$ and varying dimensions $d \in \{10, 100, 1000, 10000\}$. Each $U$ has i.i.d.\ $\mathcal{N}(0,1)$ entries. As $d$ increases, $G_d$ becomes visually closer to the identity matrix, with diagonal entries near $1$ and off-diagonal entries shrinking toward $0$.}
    \label{fig:gram_heatmaps}
\end{figure}

\section{Proofs}





\subsection{Proof of Proposition~\ref{prop:mse}}
\begin{proof}
\label{Proof:prop_mse}

\noindent\textbf{Analysis of ZO-Avg.}
Since ZO-Avg is unbiased, its MSE is the trace of its covariance matrix. Let $g = \nabla f(x)$ and $\hat{g}_i = (g^T u_i)u_i$. The ZO-Avg estimator is $\hat{g}_{AVG} = \frac{1}{q} \sum_{i=1}^q \hat{g}_i$. As the directions $u_i \sim \mathcal{N}(0, I)$ are i.i.d., the terms $\hat{g}_i$ are also i.i.d. random vectors. The variance of the average is thus:
\begin{equation*}
    \text{Var}(\hat{g}_{AVG}) = \frac{1}{q^2} \sum_{i=1}^q \text{Var}(\hat{g}_i) = \frac{1}{q} \text{Var}(\hat{g}_1).
\end{equation*}
To compute $\text{Var}(\hat{g}_1)$, we need the first two moments of $\hat{g}_1$. We have $\mathbb{E}[\hat{g}_1] = \mathbb{E}[u_1 u_1^T]g = I g = g$. For the second moment, we use the known result for a standard Gaussian vector $u \sim \mathcal{N}(0, I)$ and a fixed vector $v$: $\mathbb{E}[(v^T u)^2 u u^T] = 2vv^T + \|v\|^2 I$. Thus:
\begin{equation*}
    \mathbb{E}[\hat{g}_1 \hat{g}_1^T] = \mathbb{E}[(g^T u_1)^2 u_1 u_1^T] = 2gg^T + \|g\|^2 I.
\end{equation*}
The covariance matrix of a single estimator is:
\begin{align*}
    \text{Var}(\hat{g}_1) &= \mathbb{E}[\hat{g}_1 \hat{g}_1^T] - \mathbb{E}[\hat{g}_1]\mathbb{E}[\hat{g}_1]^T \\
    &= (2gg^T + \|g\|^2 I) - gg^T = gg^T + \|g\|^2 I.
\end{align*}
The covariance of the ZO-Avg estimator is $\text{Var}(\hat{g}_{AVG}) = \frac{1}{q}(gg^T + \|g\|^2 I)$. The MSE is the trace of this matrix:
\begin{align*}
    \text{MSE}(\hat{g}_{AVG}) &= \text{tr}(\text{Var}(\hat{g}_{AVG})) = \frac{1}{q} \text{tr}(gg^T + \|g\|^2 I) \\
    &= \frac{1}{q} (\text{tr}(gg^T) + \|g\|^2 \text{tr}(I)) = \frac{1}{q}(\|g\|^2 + d\|g\|^2) = \frac{d+1}{q} \|g\|^2.
\end{align*}
\noindent\textbf{Analysis of ZO-Align.}
The ZO-Align estimator, $\hat{g}_{ALGN} = P_U g$, is biased, with $\mathbb{E}[\hat{g}_{ALG}] = \frac{q}{d}g$. Here, we compute the MSE directly. Let $\hat{g}_{U} = P_U g$.
\begin{align*}
    \text{MSE}(\hat{g}_{ALG}) &= \mathbb{E}[\|\hat{g}_{U} - g\|^2] = \mathbb{E}[\|(P_U - I)g\|^2] \\
    &= \mathbb{E}[g^T (P_U - I)^T (P_U - I) g].
\end{align*}
Since $P_U$ is a projection matrix, it is symmetric ($P_U^T = P_U$) and idempotent ($P_U^2 = P_U$). Therefore, $(P_U - I)^2 = P_U^2 - 2P_U + I = P_U - 2P_U + I = I - P_U$.
\begin{align*}
    \text{MSE}(\hat{g}_{ALG}) &= \mathbb{E}[g^T(I - P_U)g] = g^T \mathbb{E}[I - P_U] g \\
    &= g^T (I - \mathbb{E}[P_U]) g.
\end{align*}
Using the known result that the expectation of a projection onto a random $q$-dimensional subspace is $\mathbb{E}[P_U] = \frac{q}{d}I$, we have:
\begin{equation*}
    \text{MSE}(\hat{g}_{ALIGN}) = g^T (I - \frac{q}{d}I) g = \left(1 - \frac{q}{d}\right) \|g\|^2 = \frac{d-q}{d} \|g\|^2.
\label{eq:mse_align}
\end{equation*}
\end{proof}

\subsection{Proof of Proposition~\ref{prop:optimal_stepsizes}}\label{Proof:prop_optimal_stepsizes}
\begin{proof}
    From the $L$-smoothness of $f$, we have the descent lemma:
$$f(x_{t+1}) \le f(x_t) + \langle \nabla f(x_t), x_{t+1} - x_t \rangle + \frac{L}{2} \|x_{t+1} - x_t\|^2.$$ 
For ZO-AVG, Substituting the update rule $x_{t+1} - x_t = -\eta_t \hat{g}_{AVG}(x_t)$ and taking the expectation conditional on $x_t$:
$$\mathbb{E}[f(x_{t+1}) | x_t] \le f(x_t) - \eta_t \langle \nabla f(x_t), \mathbb{E}[\hat{g}_{AVG}(x_t)] \rangle + \frac{L\eta_t^2}{2} \mathbb{E}[\|\hat{g}_{AVG}(x_t)\|^2]$$
Using the properties $\mathbb{E}[\hat{g}_{AVG}(x_t)] = \nabla f(x_t)$ and $\mathbb{E}[\|\hat{g}_{AVG}(x_t)\|^2] = \frac{q_t+d+1}{q_t}\|\nabla f(x_t)\|^2$, we get:
$$
\mathbb{E}[f(x_{t+1}) | x_t] \le f(x_t) - \eta_t \|\nabla f(x_t)\|^2 + \frac{L\eta_t^2(q_t+d+1)}{2q_t} \|\nabla f(x_t)\|^2.  
$$
It is easy to see that the optimal step size $\eta_t^* = \frac{q_t}{L(q_t+d+1)}$.

Similarly, for ZO-Align, using the properties $\mathbb{E}[\hat{g}_{ALG}(x_t)]=\frac{q_t}{d}\nabla f(x_t)$ and $\mathbb{E}[\|\hat{g}_{ALG}(x_t)\|^2]=\frac{q_t}{d}\|\nabla f(x_t)\|^2$, we have:
$$
\mathbb{E}[f(x_{t+1})|x_t] \le f(x_t) - \frac{q_t}{d}\eta_t\|\nabla f(x_t)\|^2+ \frac{L\eta^2_tq_t}{2d}\|\nabla f(x_t)\|^2. 
$$
It is easy to see that the optimal step size $\eta_t^*=1/L$.
\end{proof}

\subsection{Proof of Theorem~\ref{thm:zo_avg_strong_convex}}
\begin{proof}
\label{Proof:thm_zo_avg_strongly_convex}
The first part of the proof establishes the one-step contraction factor. From the $L$-smoothness of $f$, we have the descent lemma:
$$f(x_{t+1}) \le f(x_t) + \langle \nabla f(x_t), x_{t+1} - x_t \rangle + \frac{L}{2} \|x_{t+1} - x_t\|^2$$
Substituting the update rule $x_{t+1} - x_t = -\eta_t \hat{g}_{AVG}(x_t)$ and taking the expectation conditional on $x_t$:
$$\mathbb{E}[f(x_{t+1}) | x_t] \le f(x_t) - \eta_t \langle \nabla f(x_t), \mathbb{E}[\hat{g}_{AVG}(x_t)] \rangle + \frac{L\eta_t^2}{2} \mathbb{E}[\|\hat{g}_{AVG}(x_t)\|^2]$$
Using the properties $\mathbb{E}[\hat{g}_{AVG}(x_t)] = \nabla f(x_t)$ and $\mathbb{E}[\|\hat{g}_{AVG}(x_t)\|^2] = \frac{q_t+d+1}{q_t}\|\nabla f(x_t)\|^2$, we get:
$$
\mathbb{E}[f(x_{t+1}) | x_t] \le f(x_t) - \eta_t \|\nabla f(x_t)\|^2 + \frac{L\eta_t^2(q_t+d+1)}{2q_t} \|\nabla f(x_t)\|^2.  
$$
It is easy to see that the optimal step size $\eta_t^* = \frac{q_t}{L(q_t+d+1)}$. Plugging in the optimal step size, we obtain the one-step progress:
$$\mathbb{E}[f(x_{t+1}) | x_t] \le f(x_t) - \frac{q_t}{2L(q_t+d+1)}\|\nabla f(x_t)\|^2$$
By $\gamma$-strong convexity, $\|\nabla f(x_t)\|^2 \ge 2\gamma(f(x_t) - f(x_*))$. Let $\delta_t = E[f(x_t) - f(x_*)]$. Taking the full expectation and applying the convexity condition gives:
$$\delta_{t+1} \le \left(1 - \frac{\gamma q_t}{L(q_t+d+1)}\right)\delta_t$$
Unrolling this recurrence from $t=T-1$ down to $0$:
$$\delta_T \le \left(1 - \frac{\gamma q_{T-1}}{L(q_{T-1}+d+1)}\right) \delta_{T-1}$$
$$\le \left(1 - \frac{\gamma q_{T-1}}{L(q_{T-1}+d+1)}\right) \left(1 - \frac{\gamma q_{T-2}}{L(q_{T-2}+d+1)}\right) \delta_{T-2}$$
$$\dots \le \left( \prod_{t=0}^{T-1} \left(1 - \frac{\gamma q_t}{L(q_t+d+1)}\right) \right) \delta_0$$
This establishes the first part of the theorem.

For the second part, we must find the sequence $\{q_t\}_{t=0}^{T-1}$ that minimizes the final error bound under the budget constraint. This is equivalent to minimizing the total contraction factor. Let $\rho_t = \frac{\gamma q_t}{L(q_t+d+1)}$. We solve the following optimization problem:
$$\min_{\{q_t\}, T} \prod_{t=0}^{T-1} (1 - \rho_t) \quad \text{s.t.} \sum_{t=0}^{T-1} q_t \le K, \quad 1 \le q_t \le d, \quad q_t \in \mathbb{Z}$$
Minimizing the product is equivalent to minimizing its logarithm, $\sum_{t=0}^{T-1} \log(1 - \rho_t)$. Since $\log(x)$ is a strictly increasing function, this is equivalent to maximizing $\sum_{t=0}^{T-1} \rho_t$. Let $h(q) = \frac{q}{q+d+1}$. The problem becomes:
$$\max_{\{q_t\}, T} \sum_{t=0}^{T-1} \frac{\gamma}{L} h(q_t) \quad \text{s.t.} \sum_{t=0}^{T-1} q_t \le K, \quad 1 \le q_t \le d, \quad q_t \in \mathbb{Z}$$
The function $h(q)$ is concave for $q>0$, as its second derivative $h''(q) = \frac{-2(d+1)}{(q+d+1)^3}$ is negative. By Jensen's inequality, for a fixed number of iterations $T$, the sum $\sum_{t=0}^{T-1} h(q_t)$ is maximized when the $q_t$ values are as equal as possible. Furthermore, because $h(q)/q$ is a decreasing function of $q$, we gain more "value per query" for smaller $q$. This implies we should maximize the number of terms in the sum, i.e., maximize $T$. The maximum possible $T$ is $K$, which occurs when we choose $q_t=1$ for all $t=0,\dots,K-1$. This strategy satisfies both the concavity argument (all $q_t$ are equal) and the budget constraint, and is therefore optimal.
\end{proof}

\subsection{Proof of Theorem~\ref{thm:zo_align_strong_convex}}
\begin{proof}
\label{Proof:thm_zo_align_strongly_convex}
As in~\ref{Proof:thm_zo_avg_strongly_convex}, we establish the one-step contraction factor. Using the properties $\mathbb{E}[\hat{g}_{ALG}(x_t)]=\frac{q_t}{d}\nabla f(x_t)$ and $\mathbb{E}[\|\hat{g}_{ALG}(x_t)\|^2]=\frac{q_t}{d}\|\nabla f(x_t)\|^2$, we have:
$$
\mathbb{E}[f(x_{t+1})|x_t] \le f(x_t) - \frac{q_t}{d}\eta_t\|\nabla f(x_t)\|^2+ \frac{L\eta^2_tq_t}{2d}\|\nabla f(x_t)\|^2. 
$$
It is easy to see that the optimal step size $\eta_t^*=1/L$. Plugging in the optimal step size, we have:
$$\mathbb{E}[f(x_{t+1})|x_t] \le f(x_t) - \frac{q_t}{2Ld}\|\nabla f(x_t)\|^2$$
Applying the $\gamma$-strong convexity condition and taking the full expectation yields:
$$E[f(x_{t+1}) - f(x_*)] \le \left(1 - \frac{\gamma q_t}{Ld}\right)(E[f(x_t)] - f(x_*))$$
Unrolling this recurrence, just as in the proof of Theorem~\ref{thm:zo_avg_strong_convex}, gives the first result.

For the second part, we must find the allocation $\{q_t\}$ that minimizes the final error bound. This is equivalent to solving the optimization problem:
$$\min_{\{q_t\},T} \prod_{t=0}^{T-1} \left(1 - \frac{\gamma q_t}{Ld}\right) \quad \text{s.t.} \sum_{t=0}^{T-1} q_t \le K, \quad 1 \le q_t \le d, \quad q_t \in \mathbb{Z}$$
Let $c=\gamma/(Ld)$. The objective function $F(q_0,\dots,q_{T-1})=\prod_{t=0}^{T-1}(1-cq_t)$ is a product of linear terms. Without loss of geneity, it is equivalent to minimize its logarithm, $ln(F(q))$. The logarithm function is concave over the feasible region (the simplex defined by the budget constraint). A fundamental result in convex optimization is that the minimum of a concave function over a compact convex set is achieved at an extreme point (a vertex) of the set.

The vertices of the allocation simplex correspond to solutions where the budget is concentrated as much as possible. Given the constraints, the extreme points are strategies that use the largest possible query count, $q_t=d$, for some number of iterations, and $q_t=0$ for others. To satisfy the budget $\sum q_t=K$, the optimal strategy is therefore to perform $T=K/d$ iterations (assuming $K$ is a multiple of $d$) with $q_t=d$ for each iteration. This concentrates the entire query budget into the fewest, most powerful steps, thus minimizing the product of contraction factors.
\end{proof}

\subsection{Proof of Theorem~\ref{thm:zo_avg_convex}}
\begin{proof}
\label{Proof:thm_zo_avg_convex}
We analyze the evolution of the squared distance to the optimum, $\|x_t - x_*\|^2$.
$$\|x_{t+1} - x_*\|^2 = \|x_t - x_*\|^2 - 2\eta_t \hat{g}_{AVG}(x_t)^T (x_t - x_*) + \eta_t^2 \|\hat{g}_{AVG}(x_t)\|^2$$
Taking the expectation conditional on $x_t$ and using the properties of the ZO-Avg estimator, we get:
\begin{equation*}
    E[\|x_{t+1} - x_*\|^2 | x_t] = \|x_t - x_*\|^2 - 2\eta_t \nabla f(x_t)^T (x_t - x_*) + \eta_t^2 \frac{q_t+d+1}{q_t} \|\nabla f(x_t)\|^2
\end{equation*}
By convexity, we have $\nabla f(x_t)^T(x_t - x_*) \ge f(x_t) - f(x_*)$. Substituting the step size $\eta_t = \frac{q_t}{L(q_t+d+1)}$ yields:
\begin{equation*}
    E[\|x_{t+1} - x_*\|^2 | x_t] \le \|x_t - x_*\|^2 - \frac{2q_t}{L(q_t+d+1)}(f(x_t) - f(x_*)) + \frac{q_t}{L^2(q_t+d+1)} \|\nabla f(x_t)\|^2
\end{equation*}
From the one-step analysis in Theorem~\ref{thm:zo_avg_strong_convex}, we know that $E[f(x_t) - f(x_{t+1}) | x_t] \ge \frac{q_t}{2L(q_t+d+1)}\|\nabla f(x_t)\|^2$. This implies $\frac{q_t}{L(q_t+d+1)}\|\nabla f(x_t)\|^2 \le 2(f(x_t) - E[f(x_{t+1})|x_t])$. Substituting this into the inequality gives:
\begin{equation*}
    E[\|x_{t+1} - x_*\|^2 | x_t] \le \|x_t - x_*\|^2 - \frac{2q_t}{L(q_t+d+1)}(f(x_t) - f(x_*)) + \frac{2}{L}(f(x_t) - E[f(x_{t+1})|x_t])
\end{equation*}
Rearranging and taking the full expectation, we let $\delta_t = E[f(x_t) - f(x_*)]$ and $D_t = E[\|x_t - x_*\|^2]$:
\begin{equation*}
    \frac{2q_t}{L(q_t+d+1)}\delta_t \le D_t - D_{t+1} + \frac{2}{L}(\delta_t - \delta_{t+1})
\end{equation*}
Summing from $t=0$ to $T-1$:
\begin{equation*}
    \sum_{t=0}^{T-1} \frac{2q_t}{L(q_t+d+1)}\delta_t \le D_0 - D_T + \frac{2}{L}(\delta_0 - \delta_T) \le \|x_0 - x_*\|^2 + \frac{2}{L}(f(x_0) - f(x_*))
\end{equation*}
Since $\delta_t$ is a non-increasing sequence, $\delta_t \ge \delta_T$. Therefore:
\begin{equation*}
    \delta_T \left(\sum_{t=0}^{T-1} \frac{2q_t}{L(q_t+d+1)}\right) \le \|x_0 - x_*\|^2 + \frac{2}{L}(f(x_0) - f(x_*))
\end{equation*}
Rearranging gives the first result. To obtain the tightest bound on $\delta_T$, we must maximize the sum in the denominator, $\sum_{t=0}^{T-1} \frac{q_t}{q_t+d+1}$, subject to $\sum q_t \le K$. As established in the proof of Theorem~\ref{thm:zo_avg_strong_convex}, the function $h(q)=q/(q+d+1)$ is concave, and the value per query, $h(q)/q$, is maximized at $q=1$. Thus, the optimal strategy is to set $T=K$ and $q_t=1$ for all $t$. This yields the second result:
\begin{align*}
     E[f(x_K) - f(x_*)] &\le \frac{L(\|x_0 - x_*\|^2 + \frac{2}{L}(f(x_0) - f(x_*)))}{\sum_{t=0}^{K-1} \frac{2}{d+2}} \\
    &= \frac{L(d+2)(\|x_0 - x_*\|^2 + \frac{2}{L}(f(x_0) - f(x_*)))}{2K}
\end{align*}
This completes the proof.
\end{proof}

\subsection{Proof of Theorem~\ref{thm:zo_align_convex}}
\begin{proof}
\label{Proof:thm_zo_align_convex}
Following the same initial steps as in the proof of Theorem~\ref{thm:zo_avg_convex}, we start with the squared distance:
\begin{equation*}
    E[\|x_{t+1} - x_*\|^2 | x_t] = \|x_t - x_*\|^2 - 2\eta_t E[\hat{g}_{ALG}(x_t)]^T (x_t - x_*) + \eta_t^2 E[\|\hat{g}_{ALG}(x_t)\|^2]
\end{equation*}
Using the properties of ZO-Align ($E[\hat{g}_{ALG}]=\frac{q_t}{d}\nabla f(x_t)$, $E[\|\hat{g}_{ALG}\|^2]=\frac{q_t}{d}\|\nabla f(x_t)\|^2$), step size $\eta_t=1/L$, and the convexity property $\nabla f(x_t)^T(x_t - x_*) \ge f(x_t) - f(x_*)$:
\begin{equation*}
    E[\|x_{t+1} - x_*\|^2 | x_t] \le \|x_t - x_*\|^2 - \frac{2q_t}{Ld}(f(x_t) - f(x_*)) + \frac{q_t}{L^2d} \|\nabla f(x_t)\|^2
\end{equation*}
From the one-step analysis in Theorem~\ref{thm:zo_align_strong_convex}, we have $E[f(x_t) - f(x_{t+1}) | x_t] \ge \frac{q_t}{2Ld}\|\nabla f(x_t)\|^2$, which implies $\frac{q_t}{Ld}\|\nabla f(x_t)\|^2 \le 2(f(x_t) - E[f(x_{t+1})|x_t])$. Substituting this gives:
\begin{equation*}
    E[\|x_{t+1} - x_*\|^2 | x_t] \le \|x_t - x_*\|^2 - \frac{2q_t}{Ld}(f(x_t) - f(x_*)) + \frac{2}{L}(f(x_t) - E[f(x_{t+1})|x_t])
\end{equation*}
Rearranging, taking the full expectation, and summing from $t=0$ to $T-1$:
\begin{equation*}
    \sum_{t=0}^{T-1} \frac{2q_t}{Ld} E[f(x_t) - f(x_*)] \le \|x_0 - x_*\|^2 + \frac{2}{L}(f(x_0) - f(x_*))
\end{equation*}
By convexity of $f$, the sequence $E[f(x_t) - f(x_*)]$ is non-increasing. Thus, we can lower bound the left side using the final error, $E[f(x_T) - f(x_*)]$:
\begin{equation*}
    E[f(x_T) - f(x_*)] \left(\sum_{t=0}^{T-1} \frac{2q_t}{Ld}\right) \le \|x_0 - x_*\|^2 + \frac{2}{L}(f(x_0) - f(x_*))
\end{equation*}
Rearranging gives the final bound. 
\end{proof}

\subsection{Proof of Theorem~\ref{thm:zo_avg_nonconvex}}
\begin{proof}
\label{Proof:thm_zo_avg_nonconvex}
From the $L$-smoothness of $f$, we have the one-step progress from the proof of Theorem~\ref{thm:zo_avg_strong_convex}:
\begin{equation*}
E[f(x_{t+1})|x_t] \le f(x_t) - \frac{q_t}{2L(q_t+d+1)}\|\nabla f(x_t)\|^2
\end{equation*}
Taking the full expectation and rearranging gives:
\begin{equation*}
\frac{q_t}{2L(q_t+d+1)}\mathbb{E}[\|\nabla f(x_t)\|^2] \le \mathbb{E}[f(x_t)] - \mathbb{E}[f(x_{t+1})]
\end{equation*}
Summing from $t=0$ to $T-1$:
\begin{equation*}
\sum_{t=0}^{T-1} \frac{q_t}{2L(q_t+d+1)} \mathbb{E}[\|\nabla f(x_t)\|^2] \le \sum_{t=0}^{T-1} (\mathbb{E}[f(x_t)] - \mathbb{E}[f(x_{t+1})])
\end{equation*}
The right-hand side is a telescoping sum, which simplifies to $\mathbb{E}[f(x_0)] - \mathbb{E}[f(x_T)] \le f(x_0)-f^*$. For the left-hand side, we have:
\begin{equation*}
\left(\min_{t=0,...,T-1}\mathbb{E}[\|\nabla f(x_t)\|^2]\right) \left(\sum_{t=0}^{T-1} \frac{q_t}{2L(q_t+d+1)}\right) \le \sum_{t=0}^{T-1} \frac{q_t}{2L(q_t+d+1)} \mathbb{E}[\|\nabla f(x_t)\|^2]
\end{equation*}
Combining these gives the first result. To optimize the bound under a fixed budget $K$, we must maximize the denominator $\sum_{t=0}^{T-1}\frac{q_t}{q_t+d+1}$. As established in Theorem~\ref{thm:zo_avg_strong_convex}, this sum is maximized by choosing $q_t=1$ for $T=K$ iterations. Substituting this into the bound yields the final rate.
\end{proof}

\subsection{Proof of Theorem~\ref{thm:zo_align_nonconvex}}
\begin{proof}
\label{Proof:thm_zo_align_nonconvex}
The proof follows the same structure as for Theorem~\ref{thm:zo_avg_nonconvex}. From the one-step analysis in Theorem~\ref{thm:zo_align_strong_convex}, we have:
\begin{equation*}
E[f(x_{t+1})|x_t] \le f(x_t) - \frac{q_t}{2Ld}\|\nabla f(x_t)\|^2
\end{equation*}
Taking the full expectation, rearranging, and summing from $t=0$ to $T-1$:
\begin{equation*}
\sum_{t=0}^{T-1} q_t E[\|\nabla f(x_t)\|^2] \le 2Ld \sum_{t=0}^{T-1} (E[f(x_t)] - E[f(x_{t+1})]) \le 2Ld(f(x_0)-f^*)
\end{equation*}
Dividing by $\sum q_t$ and using the fact that $\min_t E[\|\nabla f(x_t)\|^2] \le \frac{1}{T}\sum E[\|\nabla f(x_t)\|^2]$ gives the result. The rate depends directly on the total number of queries, $\sum q_t$, which is maximized by using the full budget $K$.
\end{proof}

\subsection{Proof of Lemma~\ref{lem:sto_lsmooth}}
\begin{proof}
\label{Proof:lem_sto_lsmooth}
By the triangle inequality and the property $\|a+b\|^2 \le 2\|a\|^2 + 2\|b\|^2$, we have:
\begin{equation*}
\E_\xi[\|\nabla F(x, \xi)\|^2] \le 2\E_\xi[\|\nabla F(x, \xi) - \nabla F(x^*, \xi)\|^2] + 2\E_\xi[\|\nabla F(x^*, \xi)\|^2].
\end{equation*}
The second term is bounded by $2\sigma^2$ (or $2\E[\|\nabla F(x^*, \xi)\|^2]$) based on assumption~\ref{asm:bounded_variance}.
For the first term, we utilize the property of convex and $L$-smooth functions. Specifically, for any function $g$ that is convex and $L$-smooth, we have $\|\nabla g(x) - \nabla g(y)\|^2 \le 2L(g(x) - g(y) - \langle \nabla g(y), x-y \rangle)$. Applying this to $F(\cdot, \xi)$ with $y=x^*$:
\begin{equation*}
\|\nabla F(x, \xi) - \nabla F(x^*, \xi)\|^2 \le 2L \big( F(x, \xi) - F(x^*, \xi) - \langle \nabla F(x^*, \xi), x - x^* \rangle \big).
\end{equation*}
Taking the expectation with respect to $\xi$:
\begin{equation*}
\begin{aligned}
\E_\xi[\|\nabla F(x, \xi) - \nabla F(x^*, \xi)\|^2] &\le 2L \E_\xi \big[ F(x, \xi) - F(x^*, \xi) - \langle \nabla F(x^*, \xi), x - x^* \rangle \big] \\
&= 2L \big( f(x) - f(x^*) - \langle \nabla f(x^*), x - x^* \rangle \big).
\end{aligned}
\end{equation*}
Since $x^*$ is the minimizer of $f(x)$, we have $\nabla f(x^*) = 0$, causing the inner product term to vanish. Thus:
\begin{equation*}
\E_\xi[\|\nabla F(x, \xi) - \nabla F(x^*, \xi)\|^2] \le 2L(f(x)-f(x^*)).
\end{equation*}
Substituting this back into the first inequality yields the result:
\begin{equation*}
\E_\xi[\|\nabla F(x, \xi)\|^2] \le 2(2L(f(x)-f(x^*))) + 2\sigma^2 = 4L(f(x)-f(x^*)) + 2\sigma^2.
\end{equation*}
\end{proof}

\subsection{Proof of Theorem~\ref{thm:zo_avg_stochastic}}
\begin{proof}
\label{Proof:thm_zo_avg_sto}
We start by analyzing the evolution of the squared distance to the optimum, $\|x_t - x^*\|^2$. The update rule is $x_{t+1} = x_t - \eta_t \hat{g}_{AVG}(x_t)$.
\begin{equation*}
\|x_{t+1} - x^*\|^2 = \|x_t - x^*\|^2 - 2\eta_t \hat{g}_{AVG}(x_t)^T(x_t - x^*) + \eta_t^2 \|\hat{g}_{AVG}(x_t)\|^2
\end{equation*}
Let $\E_t[\cdot] = \E[\cdot | x_t]$ denote the expectation over the random direction matrix $U_t$ and the stochastic sample $\xi_t$. We have $\E_t[\hat{g}_{AVG}(x_t)] = \nabla f(x_t)$. From Proposition~\ref{prop:mse} and Lemma~\ref{lem:sto_lsmooth}, the expected squared norm of the estimator is bounded as:
\begin{equation*}
\E_t[\|\hat{g}_{AVG}(x_t)\|^2] = \frac{q_t+d+1}{q_t} \E_{\xi_t}[\|\nabla F(x_t, \xi_t)\|^2] \le \frac{q_t+d+1}{q_t}(4L(f(x_t) - f(x^*)) + 2\sigma^2)
\end{equation*}
Taking the conditional expectation of the distance expansion and using the convexity of $f$, i.e., $f(x_t) - f(x^*) \le \nabla f(x_t)^T(x_t - x^*)$:
\begin{align*}
\E_t[\|x_{t+1} - x^*\|^2] &\le \|x_t - x^*\|^2 - 2\eta_t \nabla f(x_t)^T(x_t - x^*) + \eta_t^2 \E_t[\|\hat{g}_{AVG}(x_t)\|^2] \\
&\le \|x_t - x^*\|^2 - 2\eta_t (f(x_t) - f(x^*)) + \eta_t^2 \frac{q_t+d+1}{q_t}(4L(f(x_t) - f(x^*)) + 2\sigma^2) \\
&= \|x_t - x^*\|^2 + \left(4L\eta_t^2\frac{q_t+d+1}{q_t} - 2\eta_t\right)(f(x_t) - f(x^*)) + 2\eta_t^2\frac{q_t+d+1}{q_t}\sigma^2
\end{align*}

To ensure convergence, we require the coefficient of the suboptimality gap to be effectively negative. Specifically, we enforce:
$$4L\eta_t^2 \frac{q_t+d+1}{q_t} - 2\eta_t \le -\eta_t$$$$4L\eta_t^2 \frac{q_t+d+1}{q_t} \le \eta_t$$$$\eta_t \le \frac{q_t}{4L(q_t+d+1)}$$

Since $\frac{q_t}{q_t+d+1}$ is increasing in $q_t$, the tightest constraint occurs at $q_t=1$, requiring $\eta_t \le \frac{1}{4L(d+2)}$. Our choice of $\eta_0$ ensures this holds for all $t$. Substituting this back into the inequality:$$\mathbb{E}_t[||x_{t+1} - x^*||^2] \le ||x_t - x^*||^2 - \eta_t(f(x_t) - f(x^*)) + 2\eta_t^2 \frac{q_t+d+1}{q_t}\sigma^2$$Taking the full expectation, rearranging to isolate the gap $\delta_t = \mathbb{E}[f(x_t) - f(x^*)]$, and summing from $t=0$ to $T-1$:$$\sum_{t=0}^{T-1} \eta_t \delta_t \le \mathbb{E}[||x_0 - x^*||^2] - \mathbb{E}[||x_T - x^*||^2] + 2\sigma^2 \sum_{t=0}^{T-1} \eta_t^2 \frac{q_t+d+1}{q_t}$$$$\le ||x_0 - x^*||^2 + 2\sigma^2 \sum_{t=0}^{T-1} \eta_t^2 \frac{q_t+d+1}{q_t}$$

By convexity and Jensen's inequality, for the weighted average iterate $\overline{x}_T$ with weights proportional to $\eta_t$:$$\mathbb{E}[f(\overline{x}_T) - f(x^*)] \le \frac{\sum_{t=0}^{T-1} \eta_t \delta_t}{\sum_{t=0}^{T-1} \eta_t} \le \frac{||x_0 - x^*||^2 + 2\sigma^2 \sum_{t=0}^{T-1} \eta_t^2 \frac{q_t+d+1}{q_t}}{\sum_{t=0}^{T-1} \eta_t}$$Substituting $\eta_t = \eta_0/\sqrt{t+1}$ yields the bound stated in the theorem.

\paragraph{Query Allocation Analysis}
To find the optimal query allocation under a fixed budget $K$ ($\sum q_t \le K$), we analyze the upper bound for large $K$. Let $q_t = q$ be constant, so $T = K/q$. Using asymptotic approximations $\sum_{t=0}^{T-1} (t+1)^{-1} \approx \ln T$ and $\sum_{t=0}^{T-1} (t+1)^{-1/2} \approx 2\sqrt{T}$, the bound becomes roughly:$$U(q) \approx \frac{||x_0 - x^*||^2 + 2\eta_0^2 \sigma^2 \frac{q+d+1}{q} \ln(K/q)}{2\eta_0 \sqrt{K/q}}$$$$U(q) \approx \frac{1}{2\eta_0\sqrt{K}} \left[ \sqrt{q} ||x_0 - x^*||^2 + 2\eta_0^2 \sigma^2 \frac{q+d+1}{\sqrt{q}} \ln(K/q) \right]$$

The first term scales with $\sqrt{q}$, which is strictly increasing in $q$.The second term involves $\frac{q+d+1}{\sqrt{q}} = \sqrt{q} + \frac{d+1}{\sqrt{q}}$. For $q \le d+1$, this term is dominated by $\frac{d+1}{\sqrt{q}}$ (decreasing). The convergence bound is dominated by the first term (the bias term related to the initial distance $||x_0 - x^*||^2$), which scales with $\sqrt{q}$. Since this dominant term is strictly increasing with respect to $q$, the overall upper bound $U(q)$ is minimized when $q$ is chosen to be as small as possible. Therefore, the theoretical analysis suggests that the convergence rate improves with smaller $q$, confirming that the single-query strategy ($q=1$) is the optimal allocation under a fixed query budget. This implies the optimal strategy is $q_t = 1$ for all $t$, setting $T=K$.

\end{proof}

\subsection{Proof of Theorem~\ref{thm:zo_align_stochastic}}
\begin{proof}
\label{Proof:thm_zo_align_sto}
We start by analyzing the evolution of the squared distance to the optimum, $\|x_t - x^*\|^2$. The update rule is $x_{t+1} = x_t - \eta_t \hat{g}_{ALG}(x_t)$.
\begin{equation*}
\|x_{t+1} - x^*\|^2 = \|x_t - x^*\|^2 - 2\eta_t \hat{g}_{ALG}(x_t)^T(x_t - x^*) + \eta_t^2 \|\hat{g}_{ALG}(x_t)\|^2
\end{equation*}
Let $\E_t[\cdot] = \E[\cdot | x_t]$ denote the expectation over the random direction matrix $U_t$ and the stochastic sample $\xi_t$. Using the properties of the ZO-Align estimator, we have $\E_t[\hat{g}_{ALG}(x_t)] = \frac{q_t}{d}\nabla f(x_t)$ and $\E_t[\|\hat{g}_{ALG}(x_t)\|^2] = \frac{q_t}{d}\E_{\xi_t}[\|\nabla F(x_t, \xi_t)\|^2]$.
Using Lemma~\ref{lem:sto_lsmooth}, this becomes:
\begin{equation*}
\E_t[\|\hat{g}_{ALG}(x_t)\|^2] \le \frac{q_t}{d}(4L(f(x_t) - f(x^*)) + 2\sigma^2)
\end{equation*}
Taking the conditional expectation of the distance expansion and using the convexity of $f$, i.e., $f(x_t) - f(x^*) \le \nabla f(x_t)^T(x_t - x^*)$:
\begin{align*}
\E_t[\|x_{t+1} - x^*\|^2] &\le \|x_t - x^*\|^2 - 2\eta_t \frac{q_t}{d} \nabla f(x_t)^T(x_t - x^*) + \eta_t^2 \E_t[\|\hat{g}_{ALG}(x_t)\|^2] \\
&\le \|x_t - x^*\|^2 - \frac{2\eta_t q_t}{d} (f(x_t) - f(x^*)) + \eta_t^2 \frac{q_t}{d}(4L(f(x_t) - f(x^*)) + 2\sigma^2) \\
&= \|x_t - x^*\|^2 + \left(\frac{4L\eta_t^2 q_t}{d} - \frac{2\eta_t q_t}{d}\right)(f(x_t) - f(x^*)) + \frac{2\eta_t^2 q_t}{d}\sigma^2
\end{align*}
Given the step-size condition $\eta_t \le \frac{1}{4L}$, we have $4L\eta_t \le 1$. We can bound the coefficient of the suboptimality gap:
\begin{equation*}
\frac{4L\eta_t^2 q_t}{d} - \frac{2\eta_t q_t}{d} = \frac{\eta_t q_t}{d}(4L\eta_t - 2) \le \frac{\eta_t q_t}{d}(1 - 2) = -\frac{\eta_t q_t}{d}
\end{equation*}
Substituting this bound back into the inequality for the distance:
\begin{equation*}
\E_t[\|x_{t+1} - x^*\|^2] \le \|x_t - x^*\|^2 - \frac{\eta_t q_t}{d}(f(x_t) - f(x^*)) + \frac{2\eta_t^2 q_t}{d}\sigma^2
\end{equation*}
Rearranging, taking the full expectation with $\delta_t = \E[f(x_t) - f(x^*)]$ and $D_t = \E[\|x_t - x^*\|^2]$:
\begin{equation*}
\frac{\eta_t q_t}{d}\delta_t \le D_t - D_{t+1} + \frac{2\eta_t^2 q_t}{d}\sigma^2
\end{equation*}
Summing this inequality from $t=0$ to $T-1$:
\begin{align*}
\sum_{t=0}^{T-1} \frac{\eta_t q_t}{d}\delta_t &\le \sum_{t=0}^{T-1} (D_t - D_{t+1}) + \sum_{t=0}^{T-1} \frac{2\eta_t^2 q_t}{d}\sigma^2 \\
&\le D_0 - D_T + \frac{2\sigma^2}{d} \sum_{t=0}^{T-1} \eta_t^2 q_t \le \|x_0-x^*\|^2 + \frac{2\sigma^2}{d} \sum_{t=0}^{T-1} \eta_t^2 q_t
\end{align*}
Let $\alpha_t = \frac{\eta_t q_t}{d}$. By convexity of $f$ and Jensen's inequality, the suboptimality of the weighted average iterate $\bar{x}_T = \frac{1}{\sum \alpha_t}\sum \alpha_t x_t$ is bounded by $\E[f(\bar{x}_T) - f(x^*)] \le \frac{\sum \alpha_t \delta_t}{\sum \alpha_t}$. This gives:
\begin{equation*}
\E[f(\bar{x}_T) - f(x^*)] \le \frac{\|x_0-x^*\|^2 + \frac{2\sigma^2}{d} \sum_{t=0}^{T-1} \eta_t^2 q_t}{\sum_{t=0}^{T-1} \frac{\eta_t q_t}{d}} = \frac{d\|x_0-x^*\|^2 + 2\sigma^2 \sum_{t=0}^{T-1} \eta_t^2 q_t}{\sum_{t=0}^{T-1} \eta_t q_t}
\end{equation*}
Substituting the step size $\eta_t = \eta_0/\sqrt{t+1}$ yields the bound stated in the theorem.

To optimize the bound, we consider the case where the first term in the numerator (the "bias" term related to the initial distance) dominates the second term (the "variance" term related to $\sigma^2$). This is common in the initial phase of optimization. The goal is to minimize the bound by maximizing the denominator, $\sum_{t=0}^{T-1} \frac{\eta_0 q_t}{\sqrt{t+1}}$, subject to the total budget constraint $\sum_{t=0}^{T-1} q_t \le K$.
Let's analyze the objective $\max \sum_{t=0}^{T-1} \frac{q_t}{\sqrt{t+1}}$. The term $\frac{1}{\sqrt{t+1}}$ represents the "value" or "effectiveness" of queries used at iteration $t$. Since this value is a strictly decreasing function of $t$, we should allocate our query budget to the earliest possible iterations to achieve the maximum sum. This implies a greedy strategy: use the maximum allowed queries, $q_t=d$, for the first $T=K/d$ iterations. This concentrates the entire budget $K$ where it is most effective, thus maximizing the denominator and providing the tightest convergence bound.
\end{proof}

\section{Experiment Setup}
\label{sec:appendix_exp_setup}

\paragraph{Classical objectives.}
All four classical problems use dimension $d = 1000$.

For the strongly convex quadratic, $f(x) = \tfrac{1}{2} x^\top A x + b^\top x$, we first sample a matrix $M \in \mathbb{R}^{d \times d}$ with i.i.d.\ $\mathcal{N}(0,1)$ entries and set
$A = M^\top M + \epsilon I_d$ with a small $\epsilon > 0$ to ensure good conditioning. The vector $b \in \mathbb{R}^d$ is sampled from $\mathcal{N}(0,I_d)$.

For the convex logistic regression objective, we generate synthetic data by sampling feature vectors $a_i \in \mathbb{R}^d$ i.i.d.\ from $\mathcal{N}(0,I_d)$. We draw a ground-truth weight vector $w_{\mathrm{true}} \sim \mathcal{N}(0,I_d)$ and assign binary labels $y_i \in \{-1,1\}$ according to $\mathrm{sign}(a_i^\top w_{\mathrm{true}})$ (with ties broken arbitrarily). The objective is the empirical logistic loss $\tfrac{1}{m}\sum_{i=1}^m \log(1+\exp(-y_i a_i^\top x))$.

The non-convex objective is the standard Rosenbrock function
$f(x) = \sum_{i=1}^{d-1} \bigl[100(x_{i+1}-x_i^2)^2 + (1-x_i)^2\bigr]$.

For the stochastic convex objective, we consider an $\ell_2$-regularized logistic regression loss,
\[
f(x) = \mathbb{E}_{(\mathbf{w}, y) \sim \mathcal{D}}\bigl[\log(1+\exp(-y\,\mathbf{w}^\top x))\bigr] + \tfrac{\rho}{2}\|x\|^2,
\]
where $\mathcal{D}$ is a synthetic distribution generated in the same way as for the convex logistic regression. In the algorithm, this expectation is approximated using mini-batches of examples resampled at each iteration.

\paragraph{Classical experiments: implementation details.}
For each classical objective we run both ZO-Avg and ZO-Align with several query block sizes $q$ and two total query budgets $K \in \{500, 20{,}000\}$, as described in Section~\ref{sec:exp}. Unless otherwise stated, we use the idealized one-sided finite-difference estimator with a fixed smoothing parameter $\mu=10^{-6}$. For ZO-Avg and ZO-Align in the deterministic settings, the step sizes are set according to the theoretically optimal choices in Proposition~\ref{prop:optimal_stepsizes}, namely $\eta_t = \tfrac{q_t}{L(q_t + d + 1)}$ for ZO-Avg and $\eta_t = 1/L$ for ZO-Align, where $L$ is the Lipschitz constant of the gradient (for the quadratic objective, $L$ equals the largest eigenvalue of $A$). In the stochastic convex case we use diminishing step sizes of the form $\eta_t = \eta_0 / \sqrt{t+1}$, with $\eta_0$ tuned by a small grid search. All reported curves are averaged over 10 independent runs with different random seeds.

\paragraph{High-dimensional language model fine-tuning.}
For the high-dimensional experiments we fine-tune the Qwen3-0.6B model on the SST-2 and CB sentence classification datasets using zeroth-order optimization on the cross-entropy loss. We follow a standard sequence-classification protocol: each input sentence (or sentence pair) is fed to the model, the representation of the special classification token is passed through a linear head to produce class logits, and the cross-entropy with respect to the ground-truth labels is used as the objective. All trainable parameters (including the classification head) are flattened into a single vector in $\mathbb{R}^d$, where $d$ is on the order of the total number of parameters of the fine-tuned model.

At each zeroth-order update step we sample $q$ i.i.d.\ Gaussian directions and form the direction matrix $U \in \mathbb{R}^{d \times q}$. ZO-Avg uses the standard averaging estimator based on these $q$ directions. For ZO-Align we use the diagonal approximation described in Section~\ref{sec:exp_high_dim}: instead of explicitly forming and inverting the full Gram matrix $U^\top U$, we compute only its diagonal $D = \mathrm{diag}(U^\top U)$ and apply the update $\hat g_{\mathrm{ALG\text{-}diag}}(x) = U D^{-1} v$, where $v \in \mathbb{R}^q$ collects the one-sided finite-difference directional derivatives. This approximation is motivated by the high-dimensional analysis in Section~\ref{sec:high_dim_approx}, which shows that $U^\top U$ concentrates near a scaled identity when $d \gg q$.

We fix a total query budget of $K = 20{,}000$ function evaluations and consider query block sizes $q \in \{1, 10, 100\}$. For each estimator (ZO-Avg and diagonal ZO-Align) and each choice of $q$, we run the optimizer until the budget $K$ is exhausted. A constant step size is used within each run; for fairness, we tune this step size separately for each (estimator, $q$) pair by a small logarithmic grid search. 

\paragraph{Hardware and Software}
Our experiments were conducted using Python 3.9 and PyTorch 1.12. The underlying system was Ubuntu 18.04, equipped with an Intel Xeon Gold 5320 CPU and two NVIDIA RTX 3090 GPUs.

\end{document}